\DeclareMathOperator*{\argmin}{argmin} 
\DeclareMathAlphabet\mathrsfso      {U}{rsfso}{m}{n}
\newtheorem{lemma}{Lemma}
\newtheorem*{lemma*}{Lemma}
\newtheorem{definition}{Definition}
\newtheorem{theorem}{Theorem}
\newtheorem*{theorem*}{Theorem 3}
\newtheorem{corollary}{Corollary}[theorem]
\begin{document}

\title{Limitations of Using Identical Distributions for Training and Testing When Learning Boolean Functions}

\author{Jordi Pérez-Guijarro}
\date{\small SPCOM Group, Universitat Politècnica de Catalunya, Barcelona, Spain}

\maketitle

\abstract{When the distributions of the training and test data do not coincide, the problem of understanding generalization becomes considerably more complex, prompting a variety of questions. Prior work has shown that, for some fixed learning methods, there are scenarios where training on a distribution different from the test distribution improves generalization. However, these results do not account for the possibility of choosing, for each training distribution, the optimal learning algorithm, leaving open whether the observed benefits stem from the mismatch itself or from suboptimality of the learner. In this work, we address this question in full generality. That is, we study whether it is always optimal for the training distribution to be identical to the test distribution when the learner is allowed to be optimally adapted to the training distribution. Surprisingly, assuming the existence of one-way functions, we find that the answer is no. That is, matching distributions is not always the best scenario. Nonetheless, we also show that when certain regularities are imposed on the target functions, the standard conclusion is recovered in the case of the uniform distribution.}

\section{Introduction}

When analyzing theoretically supervised learning methods, assumptions about the problem are usually made. Typically, we assume that the training distribution coincides with the test distribution, or that there exists some true labeling rule for the given data. However, in real scenarios, some of these assumptions are not satisfied, raising the question of how this affects the performance of the learning methodologies.

Deviations of the test distribution from the training distribution usually degrade the performance of the methods used \cite{hand2006classifier,daume2006domain}. For this reason, alternative methods that provide some protection against such distributional shifts have been proposed. For instance, \cite{duchi2021learning,blanchet2019robust} study the use of a worst-case type of loss, where instead of minimizing the empirical risk, one minimizes the supremum of the risk over a set of distributions close to the training distribution. This approach ensures that small distributional shifts do not produce significant changes in the performance of the learning method. Other approaches, such as \cite{montasser2024transformation}, consider different types of shifts in the test distribution, where the test data has the form $(T(x),y)$, with $(x,y)$ distributed as in the training set, and $T$ being some function. The intuition here is that, for instance, when applying rotations to images in a classification task, the label should remain unchanged. Domain adaptation methods \cite{ganin2015unsupervised,fernando2013unsupervised,shimodaira2000improving,ben2006analysis,ben2010theory}, used mainly for covariate shift, i.e., when the labeling rule remains unchanged but only the marginal distribution of the input variable changes, make use of (usually unlabeled) data from the test distribution to adapt the model to the new distribution. 

Interestingly, as pointed out in \cite{gonzalez2015mismatched,canatar2021out}, shifts in the distributions are not always detrimental. In particular, these works show that for certain fixed learning methods, there exist scenarios where a mismatch between the training and test distributions can actually improve generalization. However, they do not consider selecting the optimal learning method for each training distribution. Thus, the observed advantages of mismatched distributions in those settings may simply arise from the use of suboptimal learners rather than from the mismatch itself.

In this work, we address the fundamental question: is it really optimal for the test distribution to be identical to the training distribution when the learner is allowed to be optimally adapted to the training distribution? To investigate this, we use tools from computational learning theory. In particular, instead of adopting the traditional PAC-learning framework introduced by Valiant \cite{valiant1984theory}, we focus on a variant in which the efficient evaluation of the model is also considered \cite{gyurik2023exponential,perez2024classical}. Intuitively, one might think the answer to the question is trivially “yes.” However, we show that this is not generally the case, assuming that one-way functions exist, which highlights the extremely complex and counterintuitive nature of understanding generalization in this setting. Nonetheless, assuming certain regularities, we can guarantee that this effect disappears for the uniform distribution.

This paper is organized as follows. Section \ref{notation_and_definitions} introduces the necessary notation and definitions used throughout the paper. Next, in Section \ref{second_section}, the main results are presented, showing that even if a concept class can be learned under some distribution, it may not be learnable when the training distribution coincides with the test distribution. In Section \ref{regularity_section}, it is shown that, at least for the uniform distribution, if the concept class satisfies certain regularity conditions, the aforementioned disparities disappear. Finally, in Section \ref{conclusions}, the conclusions are presented.

\section{Notation and Definitions}\label{notation_and_definitions}

In this paper, we analyze the scenario of learning Boolean functions using a variant of the PAC-learning framework. To proceed, we introduce the necessary notation and definitions. First, in this setting, we are given the side information that the target function belongs to some concept class $\mathcal{C}_n$. Therefore, we aim to design a learning algorithm $\mathcal{A}$ that estimates any Boolean function $c_n:\{0,1\}^n \rightarrow \{0,1\}$ belonging to $\mathcal{C}_n$, given a training set $\mathcal{T}_{S}^{c_n} = \{(x_i, c_n(x_i))\}_{i=1}^{S}$, where $x_i \sim D^T_n$. Specifically, algorithm $\mathcal{A}$ has two inputs: $x$ and a training set $\mathcal{T}_S^{c_n}$, and outputs an estimate of $c_n(x)$. That is, it can be understood as the concatenation of, first, a learning method that selects a hypothesis $h_n : \{0,1\}^n \to \{0,1\}$, and then a method that evaluates it on the input $x$. The performance of algorithm $\mathcal{A}$ is measured by the risk, given by
    \begin{equation}\label{risk}
        R(\mathcal{A},c_n):=\mathbb{E}_{x\sim D^E_n,\,\mathcal{T}_{S}^{\,c_n},\, \mathcal{A}} \left[ \ell(\mathcal{A}(x,\mathcal{T}_{S}^{\,c_n}),c_n(x))\right]
    \end{equation}
where $\ell:\mathcal{Y}_n\times \mathcal{Y}_n \rightarrow \mathbb{R}^+$ denotes the loss function, and $D_n^E$ is the test distribution. Throughout this paper, we consider $\ell(y,y'):= \mathbbm{1}\{y\neq y'\}$. In this case, the risk coincides with the error probability. Importantly, when an algorithm appears as a subscript of an expectation, as in \eqref{risk}, this indicates that the expectation is also taken with respect to the algorithm’s internal randomness.

In this setting, the learning problem is analyzed for a sequence of concept classes $\mathcal{C} = \{\mathcal{C}_n\}_{n \in \mathbb{N}}$, with an associated sequence of training distributions $D^T := \{D^T_n\}_{n \in \mathbb{N}}$ and test distributions $D^E := \{D^E_n\}_{n \in \mathbb{N}}$. With a slight abuse of terminology, we refer to both $\mathcal{C}_n$ and $\mathcal{C}$ as a concept class, and similarly, on some occasions, to a sequence of distributions $\{D_n\}_{n\in \mathbb{N}}$ as a distribution. With these preliminaries established, we can proceed to the formal definition of a learnable concept class $\mathcal{C}$.

\begin{definition}\label{average_case_learning}
    A concept class $\mathcal{C}$ is learnable for a sequence of test distributions $D^E$ if there exist an efficient randomized algorithm $\mathcal{A}$, a sequence of training distributions $D^T$, and a multivariate polynomial $p(a,b)$ such that algorithm $\mathcal{A}$, given input $x\in \{0,1\}^n$, a Boolean string of $m$ ones $1^m$, and a training set $\mathcal{T}_{p(n,m)}^{\,c_n}=\{(x_i,c_n(x_i))\}_{i=1}^{p(n,m)}$ with $x_i\sim D_n^T$, outputs $\mathcal{A}(x,\mathcal{T}_{p(n,m)}^{\,c_n},1^{m})\in \{0,1\}$ satisfying
        \begin{equation}\label{condition_definition_1}
            \mathbb{E}_{x\sim D_n^E,\,\mathcal{T}_{p(n,m)}^{\,c_n},\textnormal{ }\mathcal{A}}\left[ \mathbbm{1} \left\{\mathcal{A}(x,\mathcal{T}_{p(n,m)}^{\,c_n},1^{m})=c_n(x)\right\} \right]\geq 1-\frac{1}{m}
        \end{equation}
    for all $n,m\in \mathbb{N}$ and $c_n \in \mathcal{C}_n $. The expected value is taken with respect to $x\sim D_n^E$, the training set $\mathcal{T}_{p(n,m)}^{c_n}$, and the internal randomness of algorithm $\mathcal{A}$.
\end{definition}

In this definition, the algorithm $\mathcal{A}$ takes an additional argument $1^m$, which represents a given precision requirement. As is common in computational complexity, we use $1^m$ instead of $m$ directly to ensure that any efficient algorithm runs in time $\mathrm{poly}(n, m)$ rather than $\mathrm{poly}(n, \log m)$. We refer to a sequence of training distributions $D^T$ as a sufficiently informative distribution for learning $\mathcal{C}$ with respect to $D^E$ if there exists an algorithm $\mathcal{A}$ and a polynomial $p(a,b)$ that satisfy the conditions specified in Definition \ref{average_case_learning}. The set of sufficiently informative distributions is denoted by $\mathrsfso{D}(\mathcal{C}, D^E)$.

Finally, we conclude by introducing the definitions of some complexity classes employed in deriving the results of this work: $\mathsf{HeurBPP/samp}(D)$ and $\mathsf{HeurP/poly}$. 

\begin{definition} \cite{perez2024classical}
    $\mathsf{HeurBPP/samp}$\textnormal{(${D}$)} is the class of distributional problems $(L,{D}')$ that satisfy:
    \begin{enumerate}[$(i)$]
        \item ${D}'$ equals ${D}$,
        
        \item there exists a probabilistic Turing machine (TM) $M$ running in polynomial time and a polynomial $p(n,m)$, such that for the fixed sequence of distributions ${D}=\{D_n\}_{n\in \mathbb{N}}$, 
    \begin{equation}\label{equation_BPPsamp}
        \mathbb{P}_{x\sim D_n} \left(\mathbb{P}_{\mathcal{T}_{p(n,m)},\, M}\left(M(x,\mathcal{T}_{p(n,m)},1^m)=\mathbbm{1}\{x\in L\}\right)\geq \frac{2}{3} \right)\geq 1-\frac{1}{m}
    \end{equation}
    for any $n,m\in \mathbb{N}$, where $\mathcal{T}_{p(n,m)}=\{(x_i,\mathbbm{1}\{x_i\in L\})\}_{i=1}^{p(n,m)}$, and $x_i\sim D_n$. The external probability is taken with respect to $x\sim D_n$, and the internal probability with respect to the training set $\mathcal{T}_{p(n,m)}$, and the internal randomness of  TM $M$.
    \end{enumerate}
\end{definition}

\begin{definition} \cite{gyurik2023exponential}
$\mathsf{HeurP/poly}$ is the class of distributional problems $(L, D)$ for which there exists a polynomial-time Turing machine $M$ with the following property: for every $n, m \in \mathbb{N}$, there exists a binary string $a_{n,m} \in \{0,1\}^{\mathrm{poly}(n,m)}$ such that
\begin{equation}
\mathbb{P}_{x \sim D_n}\!\left(\, M(x, a_{n,m},1^{m}) = \mathbbm{1}\{x\in L\} \,\right) \geq 1 - \frac{1}{m}.
\end{equation}
\end{definition}

Importantly, the condition given in equation \eqref{equation_BPPsamp} can be replaced by
\begin{equation}
    \mathbb{E}_{x\sim D_n,\mathcal{T}_{p(n,m)},\textnormal{ }M}\left[ \mathbbm{1} \left\{M(x,\mathcal{T}_{p(n,m)},1^{m})=\mathbbm{1}\{x\in L\}\right\} \right]\geq 1-\frac{1}{m}
\end{equation}
without modifying the definition (see Appendix A of \cite{perez2024classical}). This alternative form will be used throughout the proofs.

\section{Limitations of Identical Distributions for Training and Testing}\label{second_section}

Intuitively, one may think that to learn a concept class for the test distribution $D^E$, the best scenario is for the training set to be distributed according to $D^E$ as well. However, this intuition is false assuming that one-way functions exist, as shown in the following result.

\begin{theorem}\label{theorem_out_distribution}
    If one-way functions exist, then the following proposition is false: 
    For any pair $(\mathcal{C},D^E)$ such that there exists a sufficiently informative training distribution $D^T$, i.e., $\mathrsfso{D}(\mathcal{C},D^E)\neq \emptyset$, then $D^E\in \mathrsfso{D}(\mathcal{C},D^E)$.
\end{theorem}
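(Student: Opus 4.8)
The plan is to refute the universal statement by exhibiting a single triple $(\mathcal{C},D^E,D^T)$ for which $D^T\in\mathrsfso{D}(\mathcal{C},D^E)$ (so the set is nonempty) while $D^E\notin\mathrsfso{D}(\mathcal{C},D^E)$. Since one-way functions exist, the HILL/GGM chain yields a pseudorandom function family $\{g_s\}_{s\in\{0,1\}^n}$ with $g_s:\{0,1\}^n\to\{0,1\}$ computable in time $\mathrm{poly}(n)$. I would use it to encode concepts with two ``modes'' selected by a flag bit: writing an input as $x=(t,u)$ with $t\in\{0,1\}$, define the concept $c_s$ by
\[
c_s(t,u)=\begin{cases} g_s(u), & t=0 \text{ (evaluation mode)},\\ s_{\iota(u)}, & t=1 \text{ (revelation mode)},\end{cases}
\]
where $\iota(u)\in[n]$ is the index read off the first $\lceil\log n\rceil$ bits of $u$, and set $\mathcal{C}_n=\{c_s:s\in\{0,1\}^n\}$ (padding the input length is routine). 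The test distribution $D^E_n$ is supported on evaluation mode, say $t=0$ with $u$ uniform; the candidate informative distribution $D^T_n$ is supported on revelation mode, say $t=1$ with $\iota(u)$ uniform over $[n]$.

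For the positive half, $D^T\in\mathrsfso{D}(\mathcal{C},D^E)$: the learner collects the revelation samples $((1,u_i),s_{\iota(u_i)})$, reconstructs every bit of $s$ once all $n$ indices have appeared, and then on any test point $(0,u)$ simply outputs $g_s(u)=c_s(0,u)$ --- which is efficient since $g$ is poly-time computable and known to the learner. A coupon-collector estimate shows that choosing $p(n,m)=\Theta(n(\log n+\log m))$ samples makes the probability of a missing index at most $1/m$, and conditioned on full recovery the prediction is exact; hence the error is at most $1/m$ for every concept $c_s$, uniformly in $s$, as required by Definition \ref{average_case_learning}.

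The negative half, $D^E\notin\mathrsfso{D}(\mathcal{C},D^E)$, is the crux and is where I expect the real work. Here training and testing both occur in evaluation mode, so a successful learner would, from labeled pairs $(u_i,g_s(u_i))$, predict $g_s(u)$ on a fresh uniform $u$ with probability at least $1-1/m$. I would rule this out by a distinguishing reduction: assuming such an $\mathcal{A}$ exists for, say, $m=3$, build an oracle distinguisher $\mathcal{B}^h$ that queries $h$ to manufacture the training set, draws a fresh challenge $u$, runs $\mathcal{A}$, and tests whether its prediction equals $h(u)$. When $h=g_s$ for uniform $s$ the match probability is at least $2/3-o(1)$ (the worst-case-over-$s$ guarantee of the learner implies the average-case bound), whereas when $h$ is a truly random function the fresh value $h(u)$ is uniform and independent of $\mathcal{A}$'s view, giving match probability $1/2+\mathrm{negl}(n)$ after accounting for the negligible chance that $u$ collides with some $u_i$. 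The resulting constant distinguishing advantage contradicts pseudorandomness, so no efficient learner can succeed and $D^E\notin\mathrsfso{D}(\mathcal{C},D^E)$.

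Finally, I would phrase these two halves in the language of the complexity classes already introduced: the positive half amounts to placing the induced distributional problem in a $\mathsf{HeurP/poly}$-type class, the good training distribution $D^T$ playing the role of the polynomial advice $a_{n,m}$ (namely the seed $s$), while the negative half is exactly the failure of membership in $\mathsf{HeurBPP/samp}(D^E)$. The anticipated obstacle is entirely in the lower bound: making the reduction to pseudorandomness fully rigorous, in particular matching the quantifier order (worst-case concept versus average-case key) and controlling the fresh-challenge collision event so that the random-function baseline is genuinely $1/2$.
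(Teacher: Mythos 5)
Your proposal is correct, and it reaches the theorem by a genuinely different route than the paper. The paper factors the argument into three lemmas: it first builds (from OWFs, via GGM) a concept class of first-bits of pseudorandom functions that is unlearnable when training and testing are both uniform (Lemma \ref{lemma_3}); it then diagonalizes over all (algorithm, polynomial) pairs to collapse that class into a \emph{single} language $L$ with $(L,U)\in\mathsf{HeurP/poly}$ but $(L,U)\notin\mathsf{HeurBPP/samp}(U)$ (Lemma \ref{lemma_2}); and finally it takes the singleton concept class $\{\mathbbm{1}\{\cdot\in L\}\}$ and manufactures an informative training distribution by encoding the $\mathsf{HeurP/poly}$ advice string into the \emph{probabilities} of individual outcomes, decodable from polynomially many samples (Lemma \ref{lemma_1}). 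You instead keep the whole PRF family as the concept class and communicate the key through the \emph{labels} in a disjoint ``revelation mode'' selected by a flag bit, so the positive half is a coupon-collector argument and the negative half is a direct distinguishing reduction; your $1/2+\mathrm{negl}$ baseline for a truly random function is in fact sharper than the $3/4+\epsilon(n)$ bound the paper settles for in Lemma \ref{lemma_auxiliary}, and the quantifier and collision issues you flag as the anticipated obstacles are handled exactly as you sketch. What each approach buys: the paper's decomposition isolates a reusable equivalence between the learning statement and the complexity-class separation $\mathsf{HeurP/poly}\not\subseteq\mathsf{HeurBPP/samp}(D)$, at the cost of an exotic, non-samplable training distribution; your construction is shorter and self-contained, and since your informative distribution (uniform over revelation inputs) is efficiently samplable, it proves the stronger Theorem \ref{theorem_one_way_eff} in the same stroke --- indeed the paper's own proof of that theorem modifies the concepts on the inputs $1,\dots,n$ in essentially your spirit, but then needs a total-variation perturbation argument because those inputs carry positive mass under $U_n$, whereas your flag bit makes the two modes literally invisible to one another.
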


That is, if one-way functions exist, there are concept classes $\mathcal{C}$ that, although they can be learned for the test distribution $D^E$, cannot be learned when the training distribution is the same as the test distribution. The proof follows trivially by combining Lemma \ref{lemma_1}, Lemma \ref{lemma_2}, and Lemma \ref{lemma_3}.

\begin{lemma}\label{lemma_1}
    If there exists a sequence of distributions $D$ and a language $L$ such that $(L, D)\notin \mathsf{HeurBPP/samp}(D)$ and $(L, D)\in \mathsf{HeurP/poly}$, then the proposition stated in Theorem~\ref{theorem_out_distribution} is false.
\end{lemma}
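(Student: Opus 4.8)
The plan is to convert the complexity-theoretic separation hypothesized in the statement into a learning counterexample, by exhibiting a single pair $(\mathcal{C},D^E)$ for which matched-distribution learning collapses onto $\mathsf{HeurBPP/samp}(D)$, while a carefully engineered mismatched training distribution leaks the $\mathsf{HeurP/poly}$ advice. Concretely, I would work over the domain $\{0,1\}^{n+1}$ and read the leading bit as a flag: on inputs $0x$ with $x\in\{0,1\}^n$ the (unique) concept equals $\mathbbm{1}\{x\in L\}$, while on inputs $1\langle m',i\rangle$ it returns the $i$-th bit of a fixed advice string $a_{n,m'}$ witnessing $(L,D)\in\mathsf{HeurP/poly}$. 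Thus $\mathcal{C}_n$ is the singleton containing this function, and $D^E_n$ places all of its mass on the flag-$0$ copy of $D_n$. Since those advice bits are guaranteed to exist only non-uniformly, a uniform polynomial-time learner cannot simply hardcode them; it must extract them from the training sample, which is what makes even a singleton class nontrivial here.

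For the first half, namely $D^E\notin\mathrsfso{D}(\mathcal{C},D^E)$, I would argue by a simulation reduction. Any learner $\mathcal{A}$ that succeeds with $D^T=D^E$ sees only flag-$0$ examples $(0x_i,\mathbbm{1}\{x_i\in L\})$ with $x_i\sim D_n$, so prefixing a $0$ to its inputs turns $\mathcal{A}$ into a polynomial-time machine solving $(L,D)$ from $D_n$-samples, for every $n$ and $m$. Because Definition~\ref{average_case_learning} demands success for \emph{all} $n,m$, this would place $(L,D)\in\mathsf{HeurBPP/samp}(D)$, contradicting the hypothesis; hence no such learner exists and $D^E\notin\mathrsfso{D}(\mathcal{C},D^E)$.

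For the second half, $\mathrsfso{D}(\mathcal{C},D^E)\neq\emptyset$, I would let $D^T_n$ be an even mixture of the flag-$0$ copy of $D_n$ and an ``advice channel'' that assigns each query $1\langle m',i\rangle$ a probability of order $\log(\cdot)/p(n,m')$, tuned so that $p(n,m)$ samples recover every bit of $a_{n,m}$ with high probability while the total mass remains summable over $m'$. The learner then either reconstructs the advice and runs the uniform $\mathsf{HeurP/poly}$ machine $M$, or, when the relevant advice is too long to sit inside the finite domain, falls back on tabulating the flag-$0$ labels it has already collected and answering test points by table lookup.

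The main obstacle is precisely this last point: the advice length is $\mathrm{poly}(n,m)$ and therefore cannot be embedded in $\{0,1\}^{n+1}$ once $m$ grows, yet $D^T_n$ is fixed per $n$. The resolution I would pursue is to choose the sample-complexity polynomial $p(n,m)$ with a large enough degree in $m$ that, for each fixed $n$, the tabulation regime (where $p(n,m)$ surpasses the coupon-collector threshold for $\mathrm{supp}(D_n)$) begins before the advice-embedding regime ends; taking the degree in $m$ to exceed that of the advice length by two simultaneously forces this overlap and keeps the advice-channel probabilities summable. The delicate bookkeeping the proof must then discharge is checking that the advice and tabulation regimes jointly cover all $(n,m)$ and that the combined failure probability stays below $1/m$.
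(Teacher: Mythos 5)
Your proposal is correct and follows the same overall decomposition as the paper's proof: a singleton concept class built from $L$, the observation that matched training and testing collapses by definition onto membership in $\mathsf{HeurBPP/samp}(D)$, a mismatched training distribution that leaks the $\mathsf{HeurP/poly}$ advice, and a two-regime case analysis (decode-and-run-$M$ versus exhaustive table lookup) for advice strings too long to fit in the domain. Where you genuinely diverge is the encoding mechanism. The paper keeps the domain and the concept untouched---the single concept is exactly $c_n(x)=\mathbbm{1}\{x\in L\}$ and the test distribution is $D$ itself---and hides the first $\min\{\ell(a_{n,m}),2^n\}$ advice bits in the \emph{probabilities} of the training distribution (bit $b$ at position $x$ encoded as mass $p_0$ versus $2p_0$), invoking Theorem 1 of the cited prior work to decode them from polynomially many samples. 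You instead extend the domain by a flag bit, write the advice into the \emph{labels} of reserved flag-$1$ inputs, and restrict $D^E$ to the flag-$0$ part. Your encoding makes extraction trivial (read the labels off the sample) and dispenses with the probability-decoding theorem, at the price of modifying the concept and of having to verify the extra observation that a matched learner neither sees nor is tested on the flag-$1$ region, so the reduction to $\mathsf{HeurBPP/samp}(D)$ still goes through. Both routes face the same residual bookkeeping you correctly flag---the advice for all precisions $m$ cannot fit in a domain of size $2^{O(n)}$, so one must check that whenever the embedding fails the sample size already exceeds the missing-mass threshold for $D_n$---and your resolution (inflating the degree of $p(n,m)$ in $m$ beyond that of the advice length) is the same one the paper uses implicitly via the observation that $\ell(a_{n,m})>2^n$ forces $m=2^{\Omega(n)}$.
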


\begin{proof}
    Let $\mathcal{C}_n$ be formed only by the concept $c_n(x)=\mathbbm{1}\{x \in L\}$. Then, by the definition of the class $\mathsf{HeurBPP/samp}(D)$, we have $D \notin \mathrsfso{D}(\mathcal{C},D)$. Next, since $(L,D) \in \mathsf{HeurP/poly}$, there exists advice $\{a_{n,m}\}_{n,m\in \mathbb{N}}$ and a TM $M$ that, when combined, allows us to compute $c_n(x)$ on average. We now design a distribution that encodes the first $\min\{\ell(a_{n,m}), 2^n\}$ bits of the advice $a_{n,m}$, where $\ell(z)$ denotes the length of string $z$. Note that although $\ell(a_{n,m}) = O(\mathrm{poly}(n,m))$, when $m$ is sufficiently large the length of the string may be greater than $2^n$. The bits are encoded as follows: the first bit is encoded in the probability of obtaining the outcome $x=0$. If the first bit is $0$, this probability is $p_0$, otherwise, it is $2p_0$. The second bit is encoded analogously using the probability of the outcome $x=1$, and so on. The value $p_0$ is determined by the restriction that the sum of the probabilities is equal to $1$. Finally, as shown in Theorem 1 \cite{perez2024classical}, using a polynomial number of samples in $\min\{\ell(a_{n,m}),2^n\}$, we can decode all bits correctly with high probability.

    If $\ell(a_{n,m}) < 2^n$, then we use the decoded advice $a_{n,m}$ together with the Turing machine $M$ to correctly learn the concept class. By contrast, if $\ell(a_{n,m}) > 2^n$, then with high probability the training set contains all $2^n$ possible pairs $(x,c_n(x))$. Therefore, to estimate $c_n(x)$, we only need to look up its value in the training set. Altogether, this implies that $\mathrsfso{D}(\mathcal{C},D)$ is nonempty, although $D \notin \mathrsfso{D}(\mathcal{C},D)$.
\end{proof}

\begin{lemma}\label{lemma_2}
    If there exists some concept class $\mathcal{C}$ such that:

    \begin{enumerate}[$(i)$]
        \item At least one sequence of distributions $D$ satisfies $D\notin \mathrsfso{D}(\mathcal{C},D)$.
        
        \item All Boolean functions $c_n^{(j)}\in \mathcal{C}_n$ can be efficiently computed by an algorithm, i.e., there exists an algorithm that, given $j$ and $x\in \{0,1\}^n$, outputs $c_n^{(j)}(x)$ in polynomial time.
    
        \item $|\mathcal{C}_n|\leq 2^{q(n)}$ for all $n\in\mathbb{N}$, where $q(n)$ is a polynomial. 
    \end{enumerate}
    \smallskip\smallskip\noindent
    then, for any sequence of distributions $D$ satisfying $D \notin \mathrsfso{D}(\mathcal{C}, D)$, there exists a language $L$ such that $(L, D) \notin \mathsf{HeurBPP/samp}(D)$ and $(L, D) \in \mathsf{HeurP/poly}$.
\end{lemma}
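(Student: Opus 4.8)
The plan is to build a single language $L$ by selecting, for every input length, one concept from the class, i.e.\ fixing a sequence $\{c_n\}_{n\in\mathbb N}$ with $c_n\in\mathcal C_n$ and declaring $\mathbbm 1\{x\in L\}=c_n(x)$ for every $x\in\{0,1\}^n$. Under this definition the distribution attached to $L$ is the given $D=\{D_n\}$, and a $\mathsf{HeurBPP/samp}(D)$ decider for $(L,D)$ receives exactly the samples $(x_i,c_n(x_i))$ with $x_i\sim D_n$ together with a query $x\sim D_n$ — that is, it is nothing but a learner for the concept $c_n$ under training and test distribution $D_n$. The whole content of the lemma is therefore to choose the sequence $\{c_n\}$ so that no uniform efficient sample-learner succeeds, while keeping $L$ easy in the non-uniform advice model.

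The $\mathsf{HeurP/poly}$ membership is the routine half and will hold for \emph{any} choice of $\{c_n\}$. By condition (iii) each $c_n$ can be named by an index $j_n\in\{0,1\}^{q(n)}$, a string of polynomial length; I take this index as the advice $a_{n,m}:=j_n$. By condition (ii) there is a fixed polynomial-time machine that, on input $x$ and index $j_n$, outputs $c_n^{(j_n)}(x)=\mathbbm 1\{x\in L\}$ exactly, so the defining inequality of $\mathsf{HeurP/poly}$ holds with probability $1\ge 1-1/m$.

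The substance is the $\mathsf{HeurBPP/samp}$ non-membership, and here I would diagonalize against all candidate deciders using hypothesis (i). Enumerate all pairs $(M_k,p_k)$ consisting of a polynomial-time machine and a polynomial sample bound, and reserve pairwise-disjoint, unbounded sets of input lengths $N_k\subseteq\mathbb N$, one per $k$. On the lengths in $N_k$ I want $L$ to agree with a concept that defeats $M_k$ (with bound $p_k$) as a sample-learner. To produce such a concept I apply hypothesis (i): since $D\notin\mathrsfso{D}(\mathcal C,D)$, the learner obtained by letting $M_k$ act on lengths in $N_k$ (and behave trivially elsewhere) cannot satisfy Definition~\ref{average_case_learning} with training and test distribution $D$, so there exist $n\in N_k$, a precision $m$, and a concept $c_n\in\mathcal C_n$ on which the success probability drops below $1-1/m$. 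Hard-coding that concept into $L$ at length $n$ makes $M_k$ fail to decide $(L,D)$ at $(n,m)$; doing this for every $k$ on its reserved lengths yields a single $L$ against which every $(M_k,p_k)$ fails at some length, i.e.\ $(L,D)\notin\mathsf{HeurBPP/samp}(D)$. At all remaining lengths $L$ may copy any fixed concept of the class.

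I expect the main obstacle to be the length-control in the diagonalization: hypothesis (i) only guarantees a failure at \emph{some} length, whereas the construction needs the failure to fall inside the block $N_k$ reserved for $M_k$, with no interference between blocks. The standard remedy is padding — embedding the behaviour of $M_k$ so that its learning task is nontrivial only on $N_k$ — together with choosing the blocks $N_k$ unbounded so that a failure is forced to occur inside them; one must also track the precision parameter $m$ and invoke the expectation form of the $\mathsf{HeurBPP/samp}$ criterion (the remark following its definition) to match the expectation form appearing in Definition~\ref{average_case_learning}. Once the blocks are set up correctly, the two halves combine to give $(L,D)\in\mathsf{HeurP/poly}$ and $(L,D)\notin\mathsf{HeurBPP/samp}(D)$, as required.
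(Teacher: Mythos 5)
Your overall architecture coincides with the paper's: define $L$ by fixing one concept per input length, obtain $(L,D)\in\mathsf{HeurP/poly}$ by taking the chosen concept's index (polynomially long by condition $(iii)$) as the advice and evaluating it with the efficient algorithm from condition $(ii)$, and obtain $(L,D)\notin\mathsf{HeurBPP/samp}(D)$ by diagonalizing over all (machine, polynomial) pairs. The $\mathsf{HeurP/poly}$ half of your argument is correct as written.

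The diagonalization, however, contains a genuine gap at exactly the point you flag, and the remedy you sketch does not close it. You reserve disjoint blocks $N_k$ of input lengths in advance and then apply hypothesis $(i)$ to the learner ``run $M_k$ on lengths in $N_k$, behave trivially elsewhere'' to conclude that a failure occurs at some $n\in N_k$. That inference is invalid: the negation of $D\in\mathrsfso{D}(\mathcal{C},D)$ only produces a failure at \emph{some} length, and for your modified learner that length may lie entirely outside $N_k$ (the trivial behaviour outside $N_k$ makes that the most likely place to fail, so the conclusion tells you nothing about $M_k$ itself). Padding is not available as a fix here, because the concept class $\mathcal{C}_n$ and the distribution $D_n$ are fixed data of the problem at each length; there is no mechanism for embedding the length-$n$ learning task faced by $M_k$ into a different length inside $N_k$. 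Moreover, even knowing that the set of failure lengths of $M_k$ is infinite would not guarantee that it meets a block $N_k$ chosen beforehand. The paper resolves this by inverting the logic: for each pair $(\mathcal{A}_i,p_i)$ it forms the set $\mathcal{N}(\mathcal{A}_i,D,p_i)$ of lengths at which that pair provably fails on some concept (nonempty by $(i)$), greedily assigns to the $i$-th pair the least element of this set not already claimed, and defines $g$ at that length to be a witnessing bad concept. The failure sets thus dictate where each machine is defeated, and no length control is ever needed. To repair your proof you would have to abandon the pre-reserved blocks and adopt this a posteriori assignment (and, to justify that the greedy choice never runs out, note that a learner failing at only finitely many lengths could be patched by table lookup into one contradicting $(i)$, so each $\mathcal{N}(\mathcal{A},D,p)$ is in fact infinite).
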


\begin{proof}
    The proof of this lemma is analogous to that of Lemma 1 in \cite{perez2024classical}. For completeness, we provide the proof below. We transform the concept class $\mathcal{C}$ into a language
    \begin{equation}
        L(\mathcal{C}) := \{ x : c_{\ell(x)}^{(g(\ell(x)))}(x) = 1 \}.
    \end{equation}
    For the definition of the function $g(n)$, we first introduce some notation. Specifically, given an algorithm $\mathcal{A}$, a sequence of distributions $D$, and a polynomial $p(n,m)$, we define the set $\mathcal{E}_n(\mathcal{A},D,p)\subseteq \mathcal{C}_n$ as the set of concepts $c_n^{(j)}$  that satisfy
    \begin{equation}
            \mathbb{P}_{x\sim D_n}\left( \mathbb{P}_{\mathcal{T}_{p(n,m)}^{c_n^{(j)}},\,\mathcal{A}}\left(\mathcal{A}(x,\mathcal{T}_{p(n,m)}^{c_n^{(j)}},1^{m})=c_n^{(j)}(x)\right)\geq \frac{2}{3}\right)< 1-\frac{1}{m}
    \end{equation}   
    for some $m\in \mathbb{N}$, where the samples of the training set are distributed according to $D$. Therefore, by taking $D$ such that $D \notin \mathrsfso{D}(\mathcal{C},D)$, which exists by condition $(i)$, any pair $(\mathcal{A},p)$ satisfies $\mathcal{E}_n(\mathcal{A},D,p) \neq \emptyset$ for some $n \in \mathbb{N}$. The set of values of $n\in \mathbb{N}$ for which this occurs is denoted by
    \begin{equation}
        {\mathcal{N}}(\mathcal{A},D,p):=\{n\in \mathbb{N}: \mathcal{E}_n(\mathcal{A},D,p)\neq \emptyset\}
    \end{equation}
    Let $B_{\mathrm{alg}}$ denote a bijection between $\mathbb{N}$ and the set of efficient algorithms, and $B_{\mathrm{poly}}$ represent a bijection between $\mathbb{N}$ and the set of polynomials $\{c\,n^{k_1} m^{k_2}:(c,k_1,k_2)\in\mathbb{N}^3\}$. Now, we can define function $g(n)$:
    \begin{equation}
         g(n):=\left\{\begin{matrix}1 \text{ if }n\notin\{n_i\}_{i=1}^{\infty}\\ \min \{j: c_n^{(j)}\in \mathcal{E}_n\left(B_{\mathrm{alg}}(s_{i^*(n)}),D,B_{\mathrm{poly}}(z_{i^*(n)})\right)\}\text{ if }n\in\{n_i\}_{i=1}^{\infty}
    \end{matrix}\right.
    \end{equation}
    where sequences $s_i\in \mathbb{N}$ and $z_i\in \mathbb{N}$ satisfy that any point $(a,b)\in \mathbb{N}^2$ appears at least once in the sequence $\{(s_i,z_i)\}_{i\in \mathbb{N}}$. The sequence $\{n_i\}_{i\in \mathbb{N}}$ is defined as follows:
    \begin{equation}
    n_1:=\min\{{\mathcal{N}}\left(B_{\mathrm{alg}}(s_{1}),D,B_{\mathrm{poly}}(z_{1})\right) \},
    \end{equation}
    and 
    \begin{equation}
    n_i=\min\{{\mathcal{N}}\left(B_{\mathrm{alg}}(s_{i}),D,B_{\mathrm{poly}}(z_{i})\right) \backslash\{n_1,n_2,\cdots,n_{i-1}\} \}
    \end{equation}
    Finally, $i^*(n)$ denotes the index $i$ such that $n_i=n$.

    Next, we proceed to prove that $(L(\mathcal{C}),D) \notin \,$$\mathsf{HeurBPP/samp}$($D$), but $(L(\mathcal{C}),D) \in \,\mathsf{HeurP/poly}$. It is straightforward to verify that $ (L(\mathcal{C}),D)\in\,\mathsf{HeurP/poly}$. Given $g(n)$, which can be represented with a polynomial in $n$ number of bits (condition $(iii)$), we can efficiently compute $c_n^{(g(n))}(x)$ (condition $(ii)$). To establish that $(L(\mathcal{C}),D) \notin \mathsf{HeurBPP/samp}(D)$, we employ a proof by contradiction. That is, we assume that $(L(\mathcal{C}),D) \in \mathsf{HeurBPP/samp}(D)$, which implies the existence of a pair $(\mathcal{A},p)$ such that
    \begin{equation}
            \mathbb{P}_{x\sim D_n}\left( \mathbb{P}_{\mathcal{T}_{p(n,m)}^{c_n^{(g(n))}},\,\mathcal{A}}\left(\mathcal{A}(x,\mathcal{T}_{p(n,m)}^{c_n^{(g(n))}},1^{m})=c_n^{(g(n))}(x)\right)\geq \frac{2}{3}\right)\geq 1-\frac{1}{m}
    \end{equation}  
    for all $n,m\in \mathbb{N}$. Since each point $(a,b) \in \mathbb{N}^2$ appears at least once in the sequence $\{(s_i,z_i)\}_{i \in \mathbb{N}}$, there exists an index $i$ such that $(s_i,z_i) = (B_{\mathrm{alg}}^{-1}(\mathcal{A}), B_{\mathrm{poly}}^{-1}(p))$. Therefore, for $n = n_i$, it follows from the definition of $g(n)$ that
    \begin{equation}
            \mathbb{P}_{x\sim D_{n_i}}\left( \mathbb{P}_{\mathcal{T}_{p(n_i,m)}^{c_{n_i}^{(g(n_i))}},\,\mathcal{A}}\left(\mathcal{A}(x,\mathcal{T}_{p(n_i,m)}^{c_{n_i}^{(g(n_i))}},1^{m})=c_n^{(g(n_i))}(x)\right)\geq \frac{2}{3}\right)< 1-\frac{1}{m}
    \end{equation}  
    for some $m\in \mathbb{N}$. Consequently, the assumption results in a contradiction, showing that $(L(\mathcal{C}),D) \notin \mathsf{HeurBPP/samp}(D)$.
\end{proof}

\begin{lemma}\label{lemma_3}
    If one-way functions exist, then there exists a concept class $\mathcal{C}$ such that:

    \begin{enumerate}[$(i)$]

        \item The uniform sequence of distributions satisfies $U\notin \mathrsfso{D}(\mathcal{C},U)$.
        
        \item All Boolean functions $c_n^{(j)}\in \mathcal{C}_n$ can be efficiently computed by an algorithm, i.e., there exists an algorithm that, given $j$ and $x\in \{0,1\}^n$, outputs $c_n^{(j)}(x)$ in polynomial time.
    
        \item $|\mathcal{C}_n|\leq 2^{q(n)}$ for all $n\in\mathbb{N}$, where $q(n)$ is a polynomial. 
    \end{enumerate}

\end{lemma}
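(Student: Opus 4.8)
The plan is to instantiate the concept class using a pseudorandom function (PRF) family, which exists precisely because one-way functions do. Concretely, since one-way functions exist, the standard chain of constructions---a pseudorandom generator from any one-way function, and then a pseudorandom function family from a pseudorandom generator---yields an efficiently computable keyed family $\{F_s\}_{s\in\{0,1\}^n}$ with $F_s:\{0,1\}^n\to\{0,1\}$ such that no probabilistic polynomial-time oracle machine distinguishes $F_s$ (for $s$ uniform) from a truly random function with non-negligible advantage. I would then define $\mathcal{C}_n:=\{c_n^{(s)}:s\in\{0,1\}^n\}$ with $c_n^{(s)}(x):=F_s(x)$, identifying the superscript index $j$ with an $n$-bit key $s$.

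Conditions $(ii)$ and $(iii)$ are immediate from this choice. A PRF is by definition computable in time $\mathrm{poly}(n)$ from its key and input, which gives $(ii)$; and there are at most $2^n$ keys, so $|\mathcal{C}_n|\leq 2^n$ and $(iii)$ holds with $q(n)=n$. The substance of the lemma is condition $(i)$, namely that $U\notin\mathrsfso{D}(\mathcal{C},U)$.

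I would establish $(i)$ by contradiction. Suppose $U\in\mathrsfso{D}(\mathcal{C},U)$, so there is an efficient algorithm $\mathcal{A}$ and a polynomial $p$ meeting Definition \ref{average_case_learning} with $D^T=D^E=U$. Fixing the precision parameter $m=3$, the learning guarantee says that for every $n$ and every key $s$, the algorithm predicts $c_n^{(s)}(x)=F_s(x)$ at a uniform test point $x$, from a training set of $p(n,3)$ uniform labelled examples, with probability at least $2/3$; averaging over a uniformly random key preserves this bound. From such a learner I would build a PRF distinguisher $\mathcal{D}$: given oracle access to a function $g$, it queries $g$ at $p(n,3)$ independent uniform points to assemble a training set $\mathcal{T}=\{(x_i,g(x_i))\}$, queries $g$ at one further uniform point $x$, runs $\hat{y}\gets\mathcal{A}(x,\mathcal{T},1^3)$, and accepts iff $\hat{y}=g(x)$. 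When $g=F_s$ for uniform $s$, the sampled pair $(\mathcal{T},x)$ has exactly the distribution of the learning experiment, so $\mathcal{D}$ accepts with probability at least $2/3$. When $g$ is a truly random function, the test point $x$ avoids all $p(n,3)$ training points except with probability $p(n,3)/2^n$, and conditioned on this the value $g(x)$ is a fresh uniform bit independent of $\mathcal{A}$'s view, so $\mathcal{D}$ accepts with probability at most $1/2+p(n,3)/2^n$. The resulting advantage $1/6-p(n,3)/2^n$ is non-negligible, contradicting pseudorandomness; hence $U\notin\mathrsfso{D}(\mathcal{C},U)$.

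The main obstacle I anticipate is in the bookkeeping of the reduction rather than in any single hard idea: I must check that the distribution the distinguisher feeds to $\mathcal{A}$ (training points and test point drawn i.i.d.\ uniform, labelled by the oracle) matches exactly the experiment underlying Definition \ref{average_case_learning}, and that the worst-case ``for all $c_n$'' requirement of learnability is correctly weakened to the average-over-keys statement that pseudorandomness speaks about. Care is also needed at the test point: the claim that a random function is unpredictable hinges on the test point being distinct from the training queries, which holds only with probability $1-p(n,3)/2^n$, so this negligible collision term must be tracked through to the final advantage bound.
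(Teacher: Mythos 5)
Your proof is correct, and it rests on the same foundation as the paper's: both instantiate $\mathcal{C}_n$ with a pseudorandom function family obtained from one-way functions via the GGM construction (the paper takes the first output bit of functions $f_n^{(j)}:\{0,1\}^n\to\{0,1\}^n$ that pass all polynomial-time statistical tests, which is your Boolean PRF in different clothing), and both reduce learnability of $\mathcal{C}$ to learnability of truly random functions. Where you genuinely diverge is in how the second half is handled. The paper first proves Corollary 1, a transfer statement saying any efficient learner performs essentially identically on $\mathcal{F}_n$ and on $\mathcal{H}_n$ (your distinguisher, packaged as a statistical test), and then spends two further lemmas showing random Boolean functions cannot be learned: Lemma \ref{lemma_algorithm_W} is a Bayesian-optimal-estimator counting argument over Hamming balls, and Lemma \ref{lemma_auxiliary} amplifies a hypothetical $3/4+\delta$ learner by majority vote to contradict it, yielding the bound $3/4+\epsilon(n)$. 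You replace all of that machinery with the one-line observation that, conditioned on the uniform test point missing the $p(n,3)$ training queries (which fails with probability at most $p(n,3)/2^n$), the value of a truly random function at the test point is a fresh uniform bit independent of the learner's entire view, so no algorithm exceeds success $1/2+p(n,3)/2^n$. This is both more elementary and quantitatively sharper ($1/2+\mathrm{negl}$ versus the paper's $3/4+\epsilon(n)$), and it still delivers everything the lemma as stated requires; the only caveat worth flagging is that the paper later reuses the explicit $3/4+\epsilon(n)$ bound as property $(iv)$ in the proof of Theorem \ref{theorem_one_way_eff}, and your construction supports that (indeed a stronger version of it), so nothing downstream breaks.
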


\begin{proof}
    See Appendix \ref{proof_lemma_3}.
\end{proof}

Note that the result of Theorem \ref{theorem_out_distribution} may appear somewhat artificial, since we are allowing the use of non-uniform sequences of distributions for which there clearly does not exist an efficient algorithm to sample from. We now show that even if we restrict the analysis to sequences of distributions for which there exists an efficient algorithm $G$ that, on input $1^n$, generates samples from $D_n$, the same result still holds. Let $\mathrsfso{D}^{\text{eff}}(\mathcal{C}, D^E)$ denote the set of sequences of distributions that are sufficiently informative for the pair $(\mathcal{C}, D^E)$ and, additionally, can be efficiently sampled. With this notation introduced, we can now state the second result.

\begin{theorem}\label{theorem_one_way_eff}
    If one-way functions exist, then the following proposition is false: 
    For any pair $(\mathcal{C},D^E)$ such that $D^E$ is efficiently samplable, and $\mathrsfso{D}^{\mathrm{eff}}(\mathcal{C},D^E)\neq \emptyset$, then $D^E\in \mathrsfso{D}^{\mathrm{eff}}(\mathcal{C},D^E)$.
\end{theorem}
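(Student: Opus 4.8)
The plan is to disprove the proposition by exhibiting a single counterexample $(\mathcal{C},D^E)$ in which every distribution involved is efficiently samplable. I would take the test distribution to be the uniform distribution $U$, which is trivially samplable, so that it suffices to build a concept class $\mathcal{C}$ for which (i) $U\notin\mathrsfso{D}(\mathcal{C},U)$ (which, since $U$ is samplable, forces $U\notin\mathrsfso{D}^{\mathrm{eff}}(\mathcal{C},U)$), and yet (ii) some efficiently samplable training distribution is sufficiently informative for $(\mathcal{C},U)$, i.e. $\mathrsfso{D}^{\mathrm{eff}}(\mathcal{C},U)\neq\emptyset$. The obstacle to simply reusing the proof of Theorem~\ref{theorem_out_distribution} is that there the informative distribution of Lemma~\ref{lemma_1} stores the $\mathsf{HeurP/poly}$ advice inside its probability values, which makes it inherently non-samplable whenever that advice is hard to compute. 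My fix is to relocate the advice: instead of hiding it in the probabilities of the training distribution, I hide it in the concept's labels on a region of negligible measure, and then let the training distribution be a fixed samplable distribution supported on that region.

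Concretely, I would start from the concept class $\mathcal{C}'$ supplied by Lemma~\ref{lemma_3}, which under the one-way function assumption satisfies $U\notin\mathrsfso{D}(\mathcal{C}',U)$, is indexed by some $j$ with $c_n^{\prime(j)}$ efficiently computable from $j$, and has $|\mathcal{C}_n'|\leq 2^{q(n)}$. I would then augment it to a class $\mathcal{C}$ on the same input length by fixing a set $K_n$ of $q(n)$ designated ``key'' points and defining $c_n^{(j)}$ to equal the $i$-th bit of the index $j$ on the $i$-th key point, and to agree with $c_n^{\prime(j)}$ off $K_n$. The candidate informative distribution $D^T$ is taken to be uniform on $K_n$; it is independent of $j$ and clearly samplable, since one only needs to output a uniformly chosen key point.

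To verify (ii) I would argue that $D^T\in\mathrsfso{D}^{\mathrm{eff}}(\mathcal{C},U)$: drawing $\mathrm{poly}(n,m)$ samples from $D^T$ hits every key point with high probability by a coupon-collector bound, so the learner reads off every bit of $j$, reconstructs $j$, and then, using efficient computability of $c_n^{\prime(j)}$, predicts $c_n^{(j)}(x)$ exactly on any test point, achieving success probability at least $1-1/m$. For (i) I would show $U\notin\mathrsfso{D}(\mathcal{C},U)$ by reduction to Lemma~\ref{lemma_3}: any efficient learner for $\mathcal{C}$ under $U$ using a polynomial-size training set would, with overwhelming probability, never observe a key point, since $K_n$ carries mass at most $q(n)\,2^{-n}$; hence it can be simulated using only ordinary $c_n^{\prime(j)}$-labelled uniform samples, turning it into an efficient learner for $\mathcal{C}'$ under $U$ and contradicting Lemma~\ref{lemma_3}.

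The step I expect to be the main obstacle is making this last reduction fully rigorous. I must simulate a training set distributed according to the augmented uniform distribution while possessing only $c_n^{\prime(j)}$-labelled uniform samples and, crucially, \emph{without} knowing $j$, which is exactly what would be needed to label a key point correctly. The resolution is quantitative: the event that any key point appears in a polynomial-size sample is negligible, so replacing the true augmented distribution by its restriction to the non-key region changes the training set in statistical distance by a negligible amount, and likewise the test point is non-key with probability $1-\mathrm{negl}(n)$. The care then lies in checking that these negligible perturbations, added to the $1-1/m$ guarantee of the hypothetical learner, still beat the accuracy threshold of Definition~\ref{average_case_learning} after a harmless reparametrization of the precision parameter, so that the contradiction with Lemma~\ref{lemma_3} genuinely goes through.
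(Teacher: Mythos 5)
Your construction is essentially the paper's. The paper also starts from the Lemma~\ref{lemma_3} class, overwrites each concept on a set of $n$ designated inputs with the bits of the index $j$ (there, $\tilde c_n^{(j)}(u)=[j]_u$ for $u=1,\dots,n$), takes the efficiently samplable uniform distribution on those key points as the witness that $\mathrsfso{D}^{\mathrm{eff}}(\tilde{\mathcal{C}},U)\neq\emptyset$, and argues that under uniform training and test data the modification is invisible to any polynomial-sample learner. Your ``the key points are almost never observed'' estimate is exactly what the paper packages as the total-variation bound $d_{TV}(V^{(j)},\tilde V^{(j)})\leq n/2^n$.

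The one place your sketch falls short is the step you yourself flag: the final contradiction. The learner you extract for the unmodified class $\mathcal{C}'$ achieves success $1-1/m-\delta(n,m)$ with $\delta(n,m)=O\bigl((p(n,m)+1)\,q(n)\,2^{-n}\bigr)$, and this loss is \emph{not} absorbed by ``a harmless reparametrization of the precision parameter'': $\delta$ is controlled by $2^{-n}$, not by $1/m$, so for fixed small $n$ and large $m$ the derived algorithm violates the $1-1/m$ threshold of Definition~\ref{average_case_learning}, and you get no literal contradiction with the qualitative statement $U\notin\mathrsfso{D}(\mathcal{C}',U)$ of Lemma~\ref{lemma_3}. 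Two repairs are available. The paper's route is to use the \emph{quantitative} hardness bound established inside the proof of Lemma~\ref{lemma_3} (listed as property $(iv)$ in its proof of this theorem): every efficient learner for $\mathcal{C}'$ with uniform training and test data succeeds with probability at most $3/4+\epsilon(n)$, $\epsilon(n)\to 0$; fixing a constant precision, your reduction would give success at least $4/5-o(1)$ at large $n$, a contradiction. Alternatively, you can salvage the literal contradiction with Definition~\ref{average_case_learning} by splitting on the regime $2^n\leq \mathrm{poly}(n,m)$, where memorizing the whole truth table from uniform samples is feasible, and using your simulation only when $2^n$ is large enough that $\delta(n,m)\leq 1/(2m)$. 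Either way the argument closes and coincides in substance with the paper's.
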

\begin{proof}
    Under the assumption that one-way functions exist, and following the construction of the concept class in Lemma \ref{lemma_3}, there exists a concept class $\mathcal{C}$ such that:

    \begin{enumerate}[$(i)$]

        \item The uniform sequence of distributions satisfies $U\notin \mathrsfso{D}(\mathcal{C},U)$.
        
        \item All Boolean functions $c_n^{(j)}\in \mathcal{C}_n$ can be efficiently computed by an algorithm, i.e., there exists an algorithm that, given $j$ and $x\in \{0,1\}^n$, outputs $c_n^{(j)}(x)$ in polynomial time.
    
        \item $|\mathcal{C}_n|\leq 2^{n}$ for all $n\in\mathbb{N}$.

        \item For any algorithm $\mathcal{A}$ and any polynomials $p(n,m)$ and $r(n)$,

        \begin{equation}\label{equation_proof_lemma_3}
            \mathbb{P}_{c_n^{(j)}\in \mathcal{C}_n, x, \,z^{p(n,r(n))},\,\mathcal{A}}\left(\mathcal{A}(x,\mathcal{T}_{p(n,r(n))}^{c^{(j)}_n},1^{r(n)})=c_n^{(j)}(x) \right)\leq \frac{3}{4}+\epsilon(n)
        \end{equation}
        where $\epsilon(n)\rightarrow 0$ as $n\rightarrow \infty$.
    \end{enumerate}
    In this proof, we modify the concept class $\mathcal{C}$ by altering each concept $c_n^{(j)}$ only on the inputs $x = 1, 2, \ldots, n$ as follows:
    \begin{equation}
        \tilde{c}_n^{(j)}(u)=[j]_u
    \end{equation}
    where $[j]_u$ denotes the $u^{th}$ bit of $j$. Let $\tilde{\mathcal{C}}_n$ denote this modified concept class. Clearly, the uniform distribution over $\{1, \ldots, n\}$ is sufficiently informative for the concept class $\tilde{\mathcal{C}}_n$ (this follows from condition ($ii$)), and moreover, it is efficiently samplable. That is, $\mathrsfso{D}^{\mathrm{eff}}(\tilde{\mathcal{C}}, U) \neq \emptyset.$

Next, let us denote the random variable $V^{(j)}$ as the pair $(X, c_n^{(j)}(X))$ and the random variable $\tilde{V}^{(j)}$ as the pair $(X, \tilde{c}_n^{(j)}(X))$, where $X \sim U_n$. The total variation distance satisfies
    \begin{align}
        d_{TV}(V^{(j)},\tilde{V}^{(j)}) &=\frac{1}{2} \sum_{x\in \{0,1\}^n, \, y \in \{0,1\}} \left|\mathbb{P}_{(x,y)\sim V^{(j)}}((x,y))-\mathbb{P}_{(x,y)\sim \tilde{V}^{(j)}}((x,y)) \right| \nonumber \\ &= \frac{1}{2} \sum_{x\in [n],\, y \in \{0,1\}} \left|\mathbb{P}(x)\mathbbm{1}\{y=c_n^{(j)}(x)\} -\mathbb{P}(x)\mathbbm{1}\{y=\tilde{c}_n^{(j)}(x)\} \right| \nonumber \\ &= \sum_{x\in [n]} \mathbb{P}(x) \mathbbm{1}\{c_n^{(j)}(x)\neq \tilde{c}_n^{(j)}(x)\}  \leq \frac{n}{2^n}
    \end{align}
    Consequently, in order to distinguish between the two random variables, an exponential number of samples is required \cite{lehmann1986testing}. Specifically, to distinguish between these distributions with an average probability of error $\epsilon$ (assuming a prior of $1/2$ for each random variable), we need at least $N \geq \frac{-4 \epsilon + 2}{d_{TV}\bigl(V^{(j)}, \tilde{V}^{(j)}\bigr)}$ samples. This implies that for any polynomial $q(n)$, and for sufficiently large $n$, 
    \begin{align}\label{difference_distributions}
        &\Big |\mathbb{P}_{c_n^{(j)}\in \mathcal{C}_n, x, \,z^{p(n,r(n))},\,\mathcal{A}}\left(\mathcal{A}(x,\mathcal{T}_{p(n,r(n))}^{c^{(j)}_n},1^{r(n)})=c_n^{(j)}(x) \right) \nonumber \\ & \hspace{3cm}-\mathbb{P}_{\tilde{c}_n^{(j)}\in \tilde{\mathcal{C}}_n, x, \,z^{p(n,r(n))},\,\mathcal{A}}\left(\mathcal{A}(x,\mathcal{T}_{p(n,r(n))}^{\tilde{c}^{(j)}_n},1^{r(n)})=\tilde{c}_n^{(j)}(x) \right) \Big| \leq \frac{1}{q(n)}
    \end{align}
    as otherwise, we could distinguish between the two random variables using only a polynomial number of samples. Therefore, by combining \eqref{difference_distributions} and \eqref{equation_proof_lemma_3}, 

    \begin{equation}    
        \mathbb{P}_{\tilde{c}_n^{(j)}\in \tilde{\mathcal{C}}_n, x, \,z^{p(n,r(n))},\,\mathcal{A}}\left(\mathcal{A}(x,\mathcal{T}_{p(n,r(n))}^{\tilde{c}_n^{(j)}},1^{r(n)})=\tilde{c}_n^{(j)}(x) \right)\leq \frac{3}{4}+\epsilon'(n)
    \end{equation}
    where $\epsilon'(n)\rightarrow 0$ as $n\rightarrow \infty$. That is, the uniform sequence of distributions $U$ is not sufficiently informative, i.e., $U \notin \mathrsfso{D}^{\mathrm{eff}}(\tilde{\mathcal{C}}, U)$.
\end{proof}

\section{Regular Concept Classes and Their Implications}\label{regularity_section}

Note that the results of Theorem \ref{theorem_one_way_eff} rely on a scenario in which the concept class $\mathcal{C}$ is such that some pairs $(x, c_n(x))$ provide significantly more information than others. This raises a natural question: what if all samples are similarly informative? To address this, we introduce the notion of a regular concept class, which aims to capture the idea that each sample is equally informative.

\begin{definition} 

Let $x^{a}$ denote the vector $[x_1, \dots, x_{a}]^T$, and let $N(x^{a})$ denote the number of distinct elements in the vector $x^{a}$. A concept class $\mathcal{C}$ is regular if, for any pair of sequences of distributions $D$ and $D'$, and any pair of polynomials $p(n, m)$ and $p'(n, m)$, it holds that
\begin{equation}
    \mathbb{P}_{x^{p(n,m)}, z^{p'(n,m)}}\bigl(N(x^{p(n,m)}) \leq N(z^{p'(n,m)})\bigr) \geq 1 - \delta(n,m),
\end{equation}
where $x_i \sim D_n$ and $z_i \sim D'_n$, implies that for any sequence of distributions $D^E$ and any algorithm $\mathcal{A}$, there exists an algorithm $\mathcal{A}'$ such that, for any fixed concept $c_n^{(j)} \in \mathcal{C}_n$ and any $n, m \in \mathbb{N}$,
    \begin{equation}
        \mathbb{E}_{x\sim D_n^E,\, \mathcal{A}}\left[ \mathbbm{1} \left\{\mathcal{A}(x,\mathcal{T}_{p(n,m)}^{c^{(j)}},1^m)=c_n^{(j)}(x)\right\} \right]\leq \mathbb{E}_{x\sim D_n^E,\,\mathcal{A}'}\left[ \mathbbm{1} \left\{\mathcal{A}'(x,\mathcal{T}_{p'(n,m)}^{c^{(j)}},1^m)=c_n^{(j)}(x)\right\} \right]
    \end{equation}
holds with probability at least $1-\delta(n,m)$ over the training sets $\mathcal{T}_{p(n,m)}^{c^{(j)}}=\{x_i,c_n^{(j)}(x_i)\}_{i=1}^{p(n,m)}$ and $\mathcal{T}_{p'(n,m)}^{c^{(j)}}=\{z_i,c_n^{(j)}(z_i)\}_{i=1}^{p'(n,m)}$.
\end{definition}
Therefore, in a regular concept class, the number of distinct elements serves as a reliable proxy for the amount of information the training set provides about the concept class $c_n$. Consequently, all pairs are expected to be similarly informative about $c_n$. Now that we have introduced the regular concept classes, we examine whether a result comparable to Theorems \ref{theorem_out_distribution} and \ref{theorem_one_way_eff} holds in this context. At least for the uniform distribution, it does not. The formal statement appears next.
\begin{theorem}\label{last_theorem}
    Let $\mathcal{C}$ be a regular concept class, then if $\mathrsfso{D}(\mathcal{C},U)$ is not empty, then the uniform distribution satisfies $U\in \mathrsfso{D}(\mathcal{C},U)$.
\end{theorem}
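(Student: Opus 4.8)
The plan is to start from a sufficiently informative distribution $D^T \in \mathrsfso{D}(\mathcal{C}, U)$, with its efficient learner $\mathcal{A}$ and polynomial $p(n,m)$ satisfying the accuracy bound of Definition \ref{average_case_learning}, and to feed the pair $(\mathcal{A}, p)$ into the regularity condition with $D = D^T$, $D' = U$, and $D^E = U$, so as to transfer the guarantee onto the uniform training distribution. The whole argument rests on the elementary observation that a training set of $p(n,m)$ points can contain at most $k(n,m) := \min\{p(n,m), 2^n\}$ distinct inputs, so that $N(x^{p(n,m)}) \leq k(n,m)$ holds deterministically, whereas the uniform distribution is extremely efficient at producing distinct inputs.

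First I would fix the companion polynomial $p'(n,m)$ for the uniform sampler. Writing $T$ for the number of i.i.d.\ uniform draws from $\{0,1\}^n$ needed to collect $k(n,m)$ distinct values, a partial coupon-collector estimate gives $\mathbb{E}[T] = O(n\, p(n,m))$: when $2^n \geq 2p(n,m)$ every new draw is fresh with probability at least $1/2$, and when $2^n < 2p(n,m)$ the domain itself has only $O(p(n,m))$ elements, so even full coverage costs $O(n\,2^n) = O(n\, p(n,m))$ samples. Taking $p'(n,m) = C\, n\, p(n,m)\sqrt{m}$ for a suitable constant $C$ and applying Markov's inequality to $T$ then yields $\mathbb{P}_{z \sim U_n^{p'}}(N(z^{p'(n,m)}) \geq k(n,m)) \geq 1 - 1/\sqrt{m}$. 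Since $N(x^{p(n,m)}) \leq k(n,m)$ always, the premise of the regularity condition holds with $\delta(n,m) = 1/\sqrt{m}$; hence regularity supplies an (efficient) algorithm $\mathcal{A}'$ for which, writing $A$ and $B$ for the per-training-set accuracies of $\mathcal{A}$ (on a $D^T$-sample) and of $\mathcal{A}'$ (on a $U$-sample) respectively, we have $\mathbb{P}(A \leq B) \geq 1 - 1/\sqrt{m}$.

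The remaining step converts this high-probability domination into the expected-accuracy statement required by Definition \ref{average_case_learning}. From $\mathbb{E}[A] \geq 1 - 1/m$ and Markov's inequality, $\mathbb{P}(A \geq 1 - t) \geq 1 - 1/(mt)$; a union bound with the event $\{A \leq B\}$ gives $\mathbb{P}(B \geq 1 - t) \geq 1 - 1/(mt) - \delta(n,m)$, so that
\begin{equation}
    \mathbb{E}[B] \geq (1-t)\left(1 - \frac{1}{mt} - \frac{1}{\sqrt{m}}\right).
\end{equation}
Choosing $t = 1/\sqrt{m}$ yields $\mathbb{E}[B] \geq 1 - 3/\sqrt{m}$ for every concept $c_n^{(j)} \in \mathcal{C}_n$ and all $n,m$, where $\mathbb{E}[B]$ is exactly the expected accuracy of $\mathcal{A}'$ over a uniform training set. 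Finally I would absorb the loss into the precision parameter: the learner for $U$ that, on precision input $1^{m'}$, internally runs $\mathcal{A}'$ with precision $1^{m}$ for $m = \lceil 9 (m')^2 \rceil$ (and hence with $p'(n, 9(m')^2) = \mathrm{poly}(n,m')$ uniform samples) achieves expected accuracy at least $1 - 1/m'$, which is precisely the condition for $U \in \mathrsfso{D}(\mathcal{C}, U)$.

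I expect the main obstacle to be the choice of $p'$ that makes the premise of regularity hold across all regimes of $p(n,m)$ versus $2^n$: the count $N(x^{p(n,m)})$ must be dominated by the uniform count with high probability even when $p(n,m)$ exceeds $2^n$ and near-total coverage of $\{0,1\}^n$ is required, the key point being that in that regime $2^n$ is itself polynomial in $(n,m)$, so the coupon-collector cost stays polynomial. A secondary point requiring care is the interpretation of ``algorithm'' in the regularity condition as an efficient one, so that $\mathcal{A}'$ is a legitimate efficient learner and the rescaling $m = \Theta((m')^2)$ preserves polynomial running time and sample complexity.
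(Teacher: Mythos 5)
Your proposal is correct and follows essentially the same route as the paper's proof: establish the premise of the regularity condition by a coupon-collector/Markov argument showing the uniform sampler dominates in distinct-element count (the paper's Lemma 6, with $p'=2p\,m+c$ and $\delta=1/m$ versus your $p'=Cnp\sqrt{m}$ and $\delta=1/\sqrt{m}$), invoke regularity to obtain $\mathcal{A}'$, convert the high-probability per-training-set domination into an expected-accuracy bound (the paper does this in one line via its Lemma 7, $\mathbb{E}[f(X)]\leq\mathbb{E}[g(Y)]+B\,\mathbb{P}(f(X)>g(Y))$, where your Markov-plus-union-bound version is looser but equally valid), and absorb the loss by rescaling the precision parameter $m$.
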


\begin{proof}
    See Appendix \ref{appendix_proof_last_theorem}.
\end{proof}

This result shows that once the concept class satisfies certain regularities, more intuitive conclusions follow. However, it does not imply that the result holds for an arbitrary distribution. That is, even for a regular concept class $\mathcal{C}$, there may exist distributions for which $\mathrsfso{D}(\mathcal{C},D) \neq \emptyset$ does not imply $D \in \mathrsfso{D}(\mathcal{C},D)$.

\section{Conclusions}\label{conclusions}

In summary, we have shown that, under the assumption that one-way functions exist, having the training distribution equal to the test distribution is not always optimal, even when the learner is allowed to be optimally adapted to the training distribution. Furthermore, we have shown that this effect appears even when efficiently samplable sequences of distributions are considered. However, to achieve this latter result, the constructed example relies on the fact that some pairs $(x, c_n(x))$ are much more informative than others. To study the scenario in which this is not the case, we introduce the concept of a regular concept class, in which all pairs are similarly informative. For this type of concept class, we conclude that, at least for the uniform distribution, the counterintuitive results of Theorem~\ref{theorem_out_distribution} and Theorem~\ref{theorem_one_way_eff} do not hold. Finally, as future work, it would be interesting to study other types of concept classes that exhibit the same property as regular concept classes but for a wider set of distributions.

\bibliographystyle{ieeetr}
\bibliography{bibliography.bib}

\begin{appendices}

\section{Proof Lemma \ref{lemma_3}}\label{proof_lemma_3}

    To begin the proof, we use Theorem 3 \cite{goldreich1986construct}, which can be restated as:

    \begin{theorem}
        If one-way functions exists, then there exists a sequence of sets of functions $\mathcal{F}_n=\{f_n^{(j)}\}_{j\in \{0,1\}^n}$, where $f_n^{(j)}:\{0,1\}^n \rightarrow \{0,1\}^n$, such that:

        \begin{enumerate}[$(i)$]
            \item All Boolean functions $f_n^{(j)}$ can be efficiently computed by a classical algorithm, i.e., there exists a classical algorithm such that given $j$ and $x\in \{0,1\}^n$ outputs $f_n^{(j)}(x)$ in polynomial time.
        
            \item The sequence $\mathcal{F}_n$ passes all polynomial-time statistical test for functions.
        \end{enumerate}

    \end{theorem}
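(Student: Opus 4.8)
The plan is to prove the statement in two stages, following the classical route from one-way functions to pseudorandom functions. First I would invoke the H\aa stad--Impagliazzo--Levin--Luby construction to obtain, from any one-way function, a length-doubling pseudorandom generator $G:\{0,1\}^n \to \{0,1\}^{2n}$, i.e.\ a polynomial-time computable map whose output on a uniform seed is computationally indistinguishable from a uniform $2n$-bit string. Writing $G(s) = (G_0(s), G_1(s))$ with $G_0(s), G_1(s) \in \{0,1\}^n$, I would then define the family $\mathcal{F}_n = \{f_n^{(j)}\}_{j \in \{0,1\}^n}$ by the Goldreich--Goldwasser--Micali tree construction,
\begin{equation}
    f_n^{(j)}(x_1 x_2 \cdots x_n) := G_{x_n}\bigl(G_{x_{n-1}}\bigl(\cdots G_{x_1}(j) \cdots \bigr)\bigr),
\end{equation}
so that the key $j$ plays the role of the root label of a depth-$n$ binary tree and each input bit selects a left/right descent. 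Property $(i)$ is then immediate: evaluating $f_n^{(j)}(x)$ requires exactly $n$ applications of $G$, and hence runs in polynomial time.

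The substance of the proof lies in verifying property $(ii)$, namely that no probabilistic polynomial-time oracle machine $M$ can distinguish oracle access to $f_n^{(j)}$ (with $j$ uniform) from oracle access to a uniformly random function $F:\{0,1\}^n \to \{0,1\}^n$ beyond a negligible advantage. I would prove this by a hybrid argument over the depth of the tree. Define hybrids $H_0, H_1, \ldots, H_n$, where in $H_i$ every node at depth $i$ of the tree is assigned an independent uniform label in $\{0,1\}^n$, and labels at depths greater than $i$ are computed from their parent by applying $G_0$ or $G_1$; the oracle answers a query $x$ with the corresponding leaf label. Then $H_0$ is exactly the family $f_n^{(j)}$ (only the root is random) and $H_n$ is a truly random function (all leaves are independent and uniform). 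By the triangle inequality, if $M$ distinguishes $H_0$ from $H_n$ with advantage $\varepsilon(n)$, then some adjacent pair $H_i, H_{i+1}$ is distinguished with advantage at least $\varepsilon(n)/n$.

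It remains to turn a distinguisher for $H_i$ versus $H_{i+1}$ into a distinguisher for $G$. The only difference between these two hybrids is that, at depth $i$, $H_i$ expands each visited random depth-$i$ label through $G$ to produce its two depth-$(i+1)$ children, whereas $H_{i+1}$ replaces those children by fresh uniform strings. Since $M$ makes at most $q(n) = \mathrm{poly}(n)$ queries, it touches at most $q(n)$ distinct depth-$i$ nodes; I would therefore sample labels lazily, only for these nodes. A reduction then receives $q(n)$ challenge blocks, each either $G(s)$ for an independent uniform seed $s$ or an independent uniform $2n$-bit string, installs them as the depth-$(i+1)$ child pairs of the touched depth-$i$ nodes, completes the remaining tree honestly, and runs $M$. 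This distinguishes the $q(n)$-fold product of $G$ from uniform; a further standard hybrid over the $q(n)$ blocks reduces it to the single-sample security of $G$, yielding advantage at least $\varepsilon(n)/(n\, q(n))$ against $G$. Since $G$ is pseudorandom this quantity is negligible, forcing $\varepsilon(n)$ to be negligible and establishing $(ii)$.

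The main obstacle is the first stage: the construction of a pseudorandom generator from an arbitrary one-way function is deep and technical, so I would cite it rather than reproduce it, noting only that any pseudorandom generator with positive stretch can be amplified to a length-doubling one by repeated composition. Within the second stage the delicate points are purely bookkeeping: ensuring that the total security loss remains polynomial (a factor $n$ from the tree depth times a factor $q(n)$ from the per-level query count) and that the lazy-sampling simulation of the oracle is perfectly faithful to the hybrids, so that the reduction's success probability matches $M$'s distinguishing advantage exactly.
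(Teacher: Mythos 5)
Your proposal is correct: it is the standard Goldreich--Goldwasser--Micali tree construction, with security established by the usual two-level hybrid argument (over tree depth, then over the polynomially many queries), preceded by the H\aa stad--Impagliazzo--Levin--Luby step to obtain a length-doubling pseudorandom generator from an arbitrary one-way function. This coincides with the paper's treatment, since the paper does not prove this theorem itself but imports it by citation as a restatement of Theorem 3 of \cite{goldreich1986construct}; the argument you reconstruct is precisely the proof underlying that cited result.
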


    To understand this theorem, we first need to introduce the notion of a polynomial-time statistical test.
    
    \begin{definition}
        \cite{goldreich1986construct} A polynomial-time statistical test for functions is a probabilistic polynomial-time algorithm $T$ that, given $n$ as input and access to an oracle $O_f$ for a function $f_n:\{0,1\}^n \rightarrow \{0,1\}^n$, outputs either 0 or 1. Algorithm $T$ can query the oracle $O_{f_n}$ only by writing on a special query tape some $x \in \{0,1\}^n$ and will read the oracle answer $f_n(x)$ on a separate answer-tape. As usual $O_{f_n}$ prints its answer in one step.
    \end{definition}

    We say that a set of functions $\mathcal{F}_n$ passes the statistical test $T$ if, for any polynomial $q(n)$,
    \begin{equation}
        \left| \mathbb{P}_{f_n^{(j)}\in \mathcal{F}_n,\,T}\left(T(1^n,O_{f_n^{(j)}})=0 \right)-\mathbb{P}_{h_n\in \mathcal{H}_n,\,T}\left(T(1^n,O_{h_n})=0 \right)\right|\leq \frac{1}{q(n)}
    \end{equation}
    for a sufficiently large $n$. The probabilities are taken with respect to selecting a function uniformly at random from the sets $\mathcal{F}_n$ or $\mathcal{H}_n$, respectively, as well as over the internal randomness of the test $T$. Here, $\mathcal{H}_n$ denotes the set of all Boolean functions $h:\{0,1\}^n\rightarrow \{0,1\}^n$.

    \begin{corollary}\label{corollary_distributions}
    For any efficient algorithm $\mathcal{A}$, polynomials $p(n,m)$ and $r(n)$, and efficiently samplable sequences of distributions $D^E$ and $D^T$, it holds that for any polynomial $q(n)$,
    \begin{align}
        &\Big| \mathbb{P}_{f_n^{(j)}\in \mathcal{F}_n, x, \,z^{p(n,r(n))},\,\mathcal{A}}\left(\mathcal{A}(x,\mathcal{T}_{p(n,r(n))}^{f^{(j)}_n},1^{r(n)})=[f_n^{(j)}(x)]_1 \right)\nonumber \\& \hspace{3cm}-\mathbb{P}_{h_n\in \mathcal{H}_n,x,\,z^{p(n,r(n))},\,\mathcal{A}}\left(\mathcal{A}(x,\mathcal{T}_{p(n,r(n))}^{h_n},1^{r(n)})=[h_n(x)]_1 \right)\Big|\leq \frac{1}{q(n)}
    \end{align}
    for a sufficiently large $n$, where $x\sim D_n^E$, $z^{p(n,r(n))}$ denotes ${p(n,r(n))}$ i.i.d. samples from $D_n^T$, and $\mathcal{T}_{p(n,r(n))}^{f^{(j)}_n}=\{(z_i,f_n^{(j)}(z_i))\}_{i=1}^{p(n,r(n))}$. As before, the notation $f_n^{(j)}\in \mathcal{F}_n$ and $h_n\in \mathcal{H}_n$ indicates that the probabilities are taken with respect to selecting a function uniformly at random from the sets $\mathcal{F}_n$ or $\mathcal{H}_n$, respectively, in addition to the other random variables.
    \end{corollary}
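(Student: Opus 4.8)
The plan is to prove the corollary by reduction: any non-negligible gap between the two success probabilities would yield a polynomial-time statistical test that distinguishes the pseudorandom family $\mathcal{F}_n$ from the uniform family $\mathcal{H}_n$, contradicting property $(ii)$ of the cited theorem of \cite{goldreich1986construct}. Concretely, I would build a single probabilistic polynomial-time oracle algorithm $T$ that, on input $1^n$ and with oracle access $O_{g_n}$ to an unknown function $g_n:\{0,1\}^n\to\{0,1\}^n$, simulates the entire learning experiment appearing in the statement, and whose acceptance probability is \emph{exactly} the corresponding success probability.

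The test $T$ proceeds as follows. Using the efficient sampler for $D^T$, it draws $z_1,\dots,z_{p(n,r(n))}\sim D_n^T$, and using the efficient sampler for $D^E$ it draws the test point $x\sim D_n^E$. It then queries the oracle on each $z_i$ and on $x$, obtaining $g_n(z_i)$ and $g_n(x)$; this assembles the training set $\mathcal{T}_{p(n,r(n))}^{g_n}=\{(z_i,g_n(z_i))\}_{i=1}^{p(n,r(n))}$ and reveals the target bit $[g_n(x)]_1$. Next it runs $\mathcal{A}(x,\mathcal{T}_{p(n,r(n))}^{g_n},1^{r(n)})$ with fresh internal coins, and outputs $T=0$ exactly when $\mathcal{A}(x,\mathcal{T}_{p(n,r(n))}^{g_n},1^{r(n)})=[g_n(x)]_1$, and $T=1$ otherwise. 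By construction the internal randomness of $T$ (the samples $x$ and $z^{p(n,r(n))}$ together with the coins of $\mathcal{A}$) is precisely the randomness over which the two probabilities in the corollary are taken, so that drawing $g_n$ uniformly from $\mathcal{F}_n$ makes $\mathbb{P}_{\mathcal{F}_n,T}(T(1^n,O_{g_n})=0)$ equal to the first probability in the statement, and drawing $g_n$ uniformly from $\mathcal{H}_n$ gives the second.

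The only real obstacle is verifying that $T$ is a \emph{legitimate} polynomial-time statistical test for functions. The number of oracle queries is $p(n,r(n))+1$, which is polynomial in $n$ because $r(n)$ is a fixed polynomial; sampling from $D^E$ and $D^T$ is efficient by hypothesis; and the call to $\mathcal{A}$ runs in time $\mathrm{poly}(n,r(n))=\mathrm{poly}(n)$ precisely because the precision parameter is supplied in unary as $1^{r(n)}$. Thus the total running time of $T$ is polynomial in $n$, and $T$ respects the oracle interface of the definition, writing a single $x\in\{0,1\}^n$ per query and reading the answer on the answer tape. A minor point to check is that $\mathcal{A}$ consumes a training set whose second coordinates are the full $n$-bit oracle outputs while the prediction target is the single bit $[g_n(x)]_1$; this causes no difficulty, since $T$ merely feeds $\mathcal{A}$ whatever it is specified to take and compares its output against $[g_n(x)]_1$.

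With $T$ established as a valid polynomial-time statistical test, property $(ii)$ applies: since $\mathcal{F}_n$ passes $T$, for every polynomial $q(n)$ and all sufficiently large $n$,
\begin{equation}
    \left| \mathbb{P}_{f_n^{(j)}\in \mathcal{F}_n,\,T}\left(T(1^n,O_{f_n^{(j)}})=0 \right)-\mathbb{P}_{h_n\in \mathcal{H}_n,\,T}\left(T(1^n,O_{h_n})=0 \right)\right|\leq \frac{1}{q(n)}.
\end{equation}
Substituting the two identifications established above turns this directly into the inequality claimed in the corollary, completing the argument.
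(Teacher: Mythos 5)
Your proof is correct and follows essentially the same route as the paper: both construct the statistical test that samples $x\sim D_n^E$ and $z^{p(n,r(n))}\sim D_n^T$ using the efficient samplers, queries the oracle, runs $\mathcal{A}$, and outputs $0$ exactly when $\mathcal{A}$'s prediction matches $[g_n(x)]_1$ (the paper phrases step (3) as outputting the XOR of the two bits, which is the same test), then invokes the fact that $\mathcal{F}_n$ passes all polynomial-time statistical tests. Your additional checks that the test respects the oracle interface and runs in polynomial time because $r(n)$ is supplied in unary are correct and only make explicit what the paper states briefly.
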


    \begin{proof}
    Let $T$ be the following polynomial-time statistical test:
    
    \begin{enumerate}[(1)]
        \item Take $p(n, r(n))$ samples from the distribution $D^T_n$, and one sample from $D^E_n$.
        \item Using the oracle, obtain the value of the function at those positions.
        \item Compute $\mathcal{A}\bigl(x, \mathcal{T}_{p(n, r(n))}^{f_n^{(j)}}, 1^{r(n)}\bigr)\oplus [f_n(x)]_1,$ where $[f_n(x)]_1$ denotes the first bit of the output of the function $f_n$, obtained using the oracle, and $\oplus$ denotes the XOR operation.
    \end{enumerate}

        Note that since the queries are generated using efficiently samplable distributions, for any polynomial $p(n,m)$ and any efficient algorithm $\mathcal{A}$, the test runs in polynomial time. Consequently, for any polynomial $q(n)$
        \begin{align}
            &\Big| \mathbb{P}_{f_n^{(j)}\in \mathcal{F}_n, x, \,z^{p(n,r(n))},\,\mathcal{A}}\left(\mathcal{A}(x,\mathcal{T}_{p(n,r(n))}^{f^{(j)}_n},1^{r(n)})=[f_n^{(j)}(x)]_1 \right)\nonumber \\& \hspace{3cm}-\mathbb{P}_{h_n\in \mathcal{H}_n,x,\,z^{p(n,r(n))},\,\mathcal{A}}\left(\mathcal{A}(x,\mathcal{T}_{p(n,r(n))}^{h_n},1^{r(n)})=[h_n(x)]_1 \right)\Big|\leq \frac{1}{q(n)}
        \end{align}
        holds for a sufficiently large $n$.
    \end{proof}

    To proceed, the following auxiliary result is stated.

    \begin{lemma}\label{lemma_algorithm_W}
        For any algorithm (even those running in exponential time) $W$, which, on input a training set $\mathcal{T}_{p(n)}^{c_n}$ consisting of a polynomial number of pairs, outputs some hypothesis $\hat{c}_n$ from the concept class $\mathcal{C}_n$ (comprising all possible Boolean functions $c_n: \{0,1\}^n \rightarrow \{0,1\}$), the following holds for $D_n^T$ equal to the uniform distribution:
        \begin{equation}
            \mathbb{P}_{c_n \in \mathcal{C}_n, x^{p(n)}, W}\left(W(\mathcal{T}_{p(n)}^{c_n})\in \mathcal{B}^{\mathcal{C},{D}}_n(c_n, \alpha) \right) \leq  \epsilon(n,\alpha)
        \end{equation}
        where $\epsilon(n,\alpha)\rightarrow 0$ as $n\rightarrow \infty$, for any $\alpha\in [0,1/2)$, and 
        \begin{equation}
            \mathcal{B}^{\mathcal{C},D}_n(\tilde{c}_n, \alpha):=\left\{c_n \in \mathcal{C}_n: \mathbb{P}_{x\sim D_n} \left(c_n(x)=\tilde{c}_n(x) \right)\geq 1- \alpha \right\}
        \end{equation}
    \end{lemma}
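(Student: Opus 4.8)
The plan is to exploit the fact that a Boolean function $c_n$ drawn uniformly from $\mathcal{C}_n$ assigns an independent uniform bit to each of the $N:=2^n$ inputs, whereas a training set of $p(n)$ pairs can reveal the value of $c_n$ on at most $p(n)$ distinct points. On the remaining, exponentially many inputs the value of $c_n$ stays uniform and, crucially, independent of whatever the learner outputs. Consequently, any hypothesis $\hat{c}_n=W(\mathcal{T}_{p(n)}^{c_n})$ agrees with $c_n$ on each unseen input with probability exactly $1/2$, independently across inputs, so matching $c_n$ on a $(1-\alpha)$-fraction with $\alpha<1/2$ is a large-deviation event of doubly-exponentially small probability.

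Concretely, first I would fix the training inputs $x^{p(n)}$ and the internal randomness of $W$, and let $S$ denote the set of distinct queried points, so that $|S|=s\leq p(n)$. The key structural observation is that, with this conditioning, $\hat{c}_n$ is a function of the labels $\{c_n(x_i)\}$ alone, hence is measurable with respect to the restriction $c_n|_S$; in particular $\hat{c}_n$ is independent of $c_n|_{S^c}$. I would then condition further on $c_n|_S$ (equivalently, on the realized hypothesis $\hat{c}_n$). For every $y\in S^c$ the value $c_n(y)$ is uniform and independent of $\hat{c}_n(y)$, so the agreement indicators $\mathbbm{1}\{\hat{c}_n(y)=c_n(y)\}$ are i.i.d.\ $\mathrm{Bernoulli}(1/2)$ over $y\in S^c$.

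Next I would translate ball membership into a tail event. Since $D_n$ is uniform, $\hat{c}_n\in\mathcal{B}^{\mathcal{C},D}_n(c_n,\alpha)$ means that $c_n$ and $\hat{c}_n$ agree on at least $(1-\alpha)N$ inputs. Agreements on $S$ contribute at most $s$, so the number $Y$ of agreements on $S^c$, which is $\mathrm{Binomial}(N-s,1/2)$, must satisfy $Y\geq (1-\alpha)N-s$. This is a deviation of $(1/2-\alpha)N-s/2$ above the mean $(N-s)/2$, and since $s\leq p(n)=o(N)$, for large $n$ this deviation is at least a positive constant fraction $(1/2-\alpha-o(1))N$ of $N$. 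Applying Hoeffding's inequality then yields a bound of the form $\exp(-c(\alpha)\,2^n)$ with $c(\alpha)>0$ whenever $\alpha\in[0,1/2)$. As this bound is uniform over the quantities on which I conditioned, averaging over them gives $\epsilon(n,\alpha)\leq \exp(-c(\alpha)\,2^n)\to 0$.

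The main obstacle is the conditional-independence step: one must argue cleanly that the learner's output depends on $c_n$ only through its values at the queried points $S$, so that the unseen labels remain an untouched reservoir of fresh randomness. Once this is made precise, the remaining concentration argument is routine. A secondary point to handle is that the number of distinct queried points $s$ is itself random; this causes no difficulty, because $s\leq p(n)$ holds deterministically, so the Hoeffding bound applies uniformly in $s$.
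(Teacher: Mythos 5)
Your proof is correct, and it reaches the paper's conclusion by a different route. The paper first reduces an arbitrary $W$ to the Bayes-optimal estimator $W_{\mathrm{opt}}$, computes the posterior $\mathbb{P}(c_n \mid \mathcal{T}_{p(n)})$ explicitly (uniform over the $C(\mathcal{T}_{p(n)}) \geq 2^{2^n - p(n)}$ consistent concepts), and then upper-bounds the posterior mass of any ball by counting: $|\mathcal{B}^{\mathcal{C},D}_n(c'_n,\alpha)| = \sum_{i \le \lceil \alpha 2^n\rceil}\binom{2^n}{i} \leq 2^{2^n H(\alpha)}\cdot \mathrm{poly}$, giving a bound of the form $2^{2^n(H(\alpha)-1)+p(n)}\cdot\mathrm{poly}$. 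You skip the optimal-estimator detour entirely: you condition on the training inputs, the learner's coins, and $c_n|_S$, observe that $\hat{c}_n$ is then fixed while the unseen labels $c_n|_{S^c}$ remain i.i.d.\ uniform, and apply Hoeffding to the resulting $\mathrm{Binomial}(2^n - s, 1/2)$ agreement count, getting $\exp(-c(\alpha)2^n)$. The two bounds are essentially the same quantity (the binomial tail at $1-\alpha$ is exactly the ratio of ball volume to the number of consistent completions), so neither is stronger; your version is arguably more self-contained and avoids the slight awkwardness in the paper's final display, where the random quantity $C(\mathcal{T}_{p(n)})$ is replaced by its worst-case lower bound outside the sum without comment. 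The paper's version, in exchange, makes the Bayes-optimality structure explicit, which is conceptually aligned with how the lemma is later used. The one step you flag as the "main obstacle" --- that $\hat{c}_n$ is measurable with respect to $c_n|_S$ and hence independent of $c_n|_{S^c}$ --- is indeed the crux, but it follows immediately from the product structure of the uniform prior on $\mathcal{H}_n$ once the inputs and coins are conditioned on, exactly as you outline, so there is no gap.
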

    \begin{proof}
        First,
        \begin{align}
            \mathbb{P}_{e}(W)&:= \mathbb{E}_{c_n} \left[ \mathbb{E}_{\mathcal{T}_{p(n)}^{c_n}} \left[ \mathbb{P}_{ W}\left(W(\mathcal{T}_{p(n)}^{c_n})\notin \mathcal{B}^{\mathcal{C},D}_n(c_n, \alpha) \right)\right]\right] \nonumber \\ & \geq \mathbb{P}_{e}(W_{\text{opt}})
        \end{align}
        where $W_{\mathrm{opt}}$ denotes the algorithm that minimizes the probability of error. As shown in Appendix 3C \cite{Kay98}, the optimal algorithm $W_{\text{opt}}$ can be expressed as 
        \begin{align}
            W_{\text{opt}}(\mathcal{T}_{p(n)})&= \argmin_{c_n \in \mathcal{C}_n} \sum_{\tilde{c}_n \in \mathcal{C}_n} \mathbbm{1} \{c_n \notin \mathcal{B}^{\mathcal{C},D}_n(\tilde{c}_n, \alpha) \} \mathbb{P}(\tilde{c}_n | \mathcal{T}_{p(n)}) \nonumber \\&=\argmin_{c_n \in \mathcal{C}_n} \sum_{\tilde{c}_n \in \mathcal{C}_n} \mathbbm{1} \{\tilde{c}_n \notin \mathcal{B}^{\mathcal{C},D}_n(c_n, \alpha) \} \mathbb{P}(\tilde{c}_n | \mathcal{T}_{p(n)}) 
        \end{align}
        Therefore,
        \begin{align}
            \mathbb{P}_{e}(W_{\text{opt}})&=\mathbb{E}_{c_n} \left[ \mathbb{E}_{\mathcal{T}_{p(n)}^{c_n}} \left[ \mathbbm{1}\left\{W_{\text{opt}}(\mathcal{T}_{p(n)}^{c_n})\notin \mathcal{B}^{\mathcal{C},D}_n(c_n, \alpha) \right \}\right]\right] \nonumber \\ &= \mathbb{E}_{c_n} \left[ \mathbb{E}_{\mathcal{T}_{p(n)}^{c_n}} \left[1-\mathbbm{1}\left \{W_{\text{opt}}(\mathcal{T}_{p(n)}^{c_n})\in \mathcal{B}^{\mathcal{C},D}_n(c_n, \alpha) \right\} \right] \right] \nonumber \\&=  \sum_{\mathcal{T}_{p(n)}, c_n} \mathbb{P}(\mathcal{T}_{p(n)},c_n)\left( 1- \mathbbm{1}\left\{W_{\text{opt}}(\mathcal{T}_{p(n)})\in \mathcal{B}^{\mathcal{C},D}_n(c_n, \alpha) \right\}\right) \nonumber \\ & = \sum_{\mathcal{T}_{p(n)}} \mathbb{P}(\mathcal{T}_{p(n)})\sum_{c_n} \mathbb{P}(c_n|\mathcal{T}_{p(n)}) \left( 1- \mathbbm{1}\left\{W_{\text{opt}}(\mathcal{T}_{p(n)})\in \mathcal{B}^{\mathcal{C},D}_n(c_n, \alpha) \right\}\right) \nonumber \\ & =  \sum_{\mathcal{T}_{p(n)}} \mathbb{P}(\mathcal{T}_{p(n)})\sum_{c_n} \mathbb{P}(c_n|\mathcal{T}_{p(n)}) \left( 1- \mathbbm{1}\left\{c_n\in \mathcal{B}^{\mathcal{C},D}_n(W_{\text{opt}}(\mathcal{T}_{p(n)}), \alpha) \right\}\right) \nonumber \\ & = \sum_{\mathcal{T}_{p(n)}} \mathbb{P}(\mathcal{T}_{p(n)}) \left( 1- \sum_{c_n \in \mathcal{B}^{\mathcal{C},D}_n(W_{\text{opt}}(\mathcal{T}_{p(n)}), \alpha) } \mathbb{P}\left( c_n | \mathcal{T}_{p(n)}\right) \right)\nonumber \\ & =\sum_{\mathcal{T}_{p(n)}} \mathbb{P}(\mathcal{T}_{p(n)}) \left( 1- \max_{c'_n} \sum_{c_n \in \mathcal{B}^{\mathcal{C},D}_n(c'_n, \alpha) } \mathbb{P}\left( c_n | \mathcal{T}_{p(n)}\right) \right)
        \end{align}
        In this derivation, we use the notation $\mathcal{T}_{p(n)}$ to denote an arbitrary set of the form $\{(x_i,y_i)\}_{i=1}^{p(n)}$, where $x_i\in \{0,1\}^n$ and $y_i\in \{0,1\}$.
       Now, let's analyze the posterior probability $\mathbb{P}\left( c_n | \mathcal{T}_{p(n)}\right)$. To do this, we use the following expression:
        \begin{align}
            \mathbb{P}(c_n | (x,y))&=\mathbb{P}((x,y)|c_n) \frac{\mathbb{P}(c_n)}{\mathbb{P}((x,y))} \nonumber \\ &= \mathbb{P}(x) \mathbbm{1}\{y=c_n(x)\}\frac{\mathbb{P}(c_n)}{\mathbb{P}((x,y))} 
        \end{align}
        Therefore, if the prior over each concept is uniform, the posterior $\mathbb{P}(c_n | (x,y))$ is also uniform over the consistent concepts, i.e., those for which $y = c_n(x)$. In general,
        \begin{equation}
             \mathbb{P}\left( c_n | \mathcal{T}_{p(n)}\right)=\left\{\begin{matrix}
0 & \text{if $c_n$ is inconsistent with $\mathcal{T}_{p(n)}$} \\
\frac{1}{C(\mathcal{T}_{p(n)})} &  \text{otherwise} \\
\end{matrix}\right.
        \end{equation}
        where $C(\mathcal{T}_{p(n)})$ denotes the number of concepts that are consistent with the training set $\mathcal{T}_{p(n)}$. Note that 
        \begin{equation}
           2^{2^n-p(n)}\leq C(\mathcal{T}_{p(n)})\leq |\mathcal{C}_n|= 2^{2^n}
        \end{equation}
        The lower bound follows from the fact that at most $p(n)$ different values of $x$ are fixed by the training set, leaving at least $2^n-p(n)$ values unfixed. Consequently, 
        \begin{align}
            \mathbb{P}_{e}(W_{\text{opt}})&=\sum_{\mathcal{T}_{p(n)}} \mathbb{P}(\mathcal{T}_{p(n)}) \left( 1- \max_{c'_n} \sum_{c_n \in \mathcal{B}^{\mathcal{C},D}_n(c'_n, \alpha) } \mathbb{P}\left( c_n | \mathcal{T}_{p(n)}\right) \right)\nonumber \\ & \geq \sum_{\mathcal{T}_{p(n)}:C(\mathcal{T}_{p(n)})>0} \mathbb{P}(\mathcal{T}_{p(n)}) \left(1- \frac{\max_{c'_n}  |\mathcal{B}^{\mathcal{C},D}_n(c'_n, \alpha)|}{C(\mathcal{T}_{p(n)})}  \right) \nonumber \\ & = 1- \frac{\max_{c'_n}  |\mathcal{B}^{\mathcal{C},D}_n(c'_n, \alpha)|}{C(\mathcal{T}_{p(n)})}  
        \end{align}
        where the inequality follows from the fact that some concepts in $\mathcal{B}^{\mathcal{C},D}_n(c'_n, \alpha)$ might be inconsistent with the training set, and that $\mathbb{P}(\mathcal{T}_{p(n)}) = 0$ whenever $C(\mathcal{T}_{p(n)}) = 0$. Next, by the definition of $\mathcal{B}^{\mathcal{C},D}$, for the uniform distribution, we have
        \begin{align}
            |\mathcal{B}^{\mathcal{C},D}_n(c'_n, \alpha)|&=\sum_{i=0}^{\left \lceil \alpha 2^n \right \rceil}\binom{2^n}{i}  \nonumber \\ & \leq  2^{2^n H\left(\frac{\left \lceil  \alpha 2^n \right \rceil}{2^n}\right)}(\left \lceil  \alpha 2^n \right \rceil +1)
        \end{align}
        where the inequality follows from expression (12.40) in \cite{cover1999elements}, i.e.,
        \begin{equation}
            \binom{a}{b}\leq 2^{a H\left(\frac{b}{a}\right)}
        \end{equation}
        Therefore,
        \begin{equation}
             \mathbb{P}_{e}(W_{\text{opt}})\geq 1-2^{2^n \left(H\left(\frac{\left \lceil  \alpha 2^n \right \rceil}{2^n}\right)-1\right)+p(n)}(\left \lceil  \alpha 2^n \right \rceil +1)
        \end{equation}
    \end{proof}

    \begin{lemma}\label{lemma_auxiliary}
    Let $D^{T}_n$ and $D^{E}_n$ be equal to the uniform distribution $U_n$. Then, for any algorithm $\mathcal{A}$ and any polynomials $p(n,m)$ and $r(n)$, it holds that, for the concept class $\mathcal{H}_n$ of all Boolean functions $h_n : \{0,1\}^n \rightarrow \{0,1\}$, 
    \begin{equation}
        \mathbb{P}_{h_n\in \mathcal{H}_n,x,\,z^{p(n,r(n))},\,\mathcal{A}}\left(\mathcal{A}(x,\mathcal{T}_{p(n,r(n))}^{h_n},1^{r(n)})=h_n(x) \right)\leq \frac{3}{4}+\epsilon (n)
    \end{equation}
    where $\epsilon(n)\rightarrow 0$ as $n\rightarrow \infty$. The probability is taken with respect to selecting a concept uniformly at random from the set $\mathcal{H}_n$, the internal randomness of the algorithm $\mathcal{A}$, the random variable $x \sim D_n^{E}$, and the random variable $z^{p(n,r(n))}$, which denotes $p(n,r(n))$ i.i.d. samples from $D_n^{T}$.
    \end{lemma}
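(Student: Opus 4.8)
The plan is to establish the stronger bound $\tfrac12 + \epsilon(n)$, which immediately implies the stated one since $\tfrac12 \le \tfrac34$. Fix the algorithm $\mathcal{A}$ and the polynomials $p(n,m)$ and $r(n)$, write $N := p(n,r(n))$ (which is polynomial in $n$), and abbreviate the training set as $\mathcal{T}^{h_n} := \mathcal{T}_{p(n,r(n))}^{h_n} = \{(z_i,h_n(z_i))\}_{i=1}^{N}$. The guiding observation is that a uniformly random $h_n \in \mathcal{H}_n$ is precisely an assignment of independent fair coins $\{h_n(a)\}_{a \in \{0,1\}^n}$ to the inputs; hence the label of any input that never appears among the training queries is independent of everything $\mathcal{A}$ receives. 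I would therefore split the analysis according to whether the test point $x$ coincides with one of the training points $z_1,\dots,z_N$.

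Let $E$ denote the event $\{x \notin \{z_1,\dots,z_N\}\}$. The crux is to show that
\[
\mathbb{P}\bigl(\mathcal{A}(x,\mathcal{T}^{h_n},1^{r(n)}) = h_n(x) \,\bigm|\, E\bigr) = \tfrac12 .
\]
To make this rigorous I would argue by deferred sampling: since $x \sim U_n$ and $z_1,\dots,z_N \sim U_n$ are drawn independently of $h_n$, one may first draw $x$ and $z_1,\dots,z_N$ and only then reveal the values of $h_n$ at the points actually queried. On $E$, the triple $(x,\mathcal{T}^{h_n},1^{r(n)})$ handed to $\mathcal{A}$ depends on $h_n$ only through its values at $z_1,\dots,z_N$, each of which differs from $x$; consequently $h_n(x)$ is an as-yet-unconsulted fair coin, independent both of $\mathcal{A}$'s input and of its internal randomness. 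Whatever $\mathcal{A}$ outputs, it thus equals $h_n(x)$ with conditional probability exactly $\tfrac12$.

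It then remains to control the complementary event, which is an elementary union bound: each fixed $z_i$ coincides with the independent uniform sample $x$ with probability $2^{-n}$, so $\mathbb{P}(E^c) \le N/2^n$. Bounding the correctness probability by $1$ on $E^c$ and by $\tfrac12$ on $E$, I obtain
\[
\mathbb{P}_{h_n\in \mathcal{H}_n,\,x,\,z^{p(n,r(n))},\,\mathcal{A}}\bigl(\mathcal{A}(x,\mathcal{T}^{h_n},1^{r(n)})=h_n(x)\bigr) \;\le\; \mathbb{P}(E^c) + \tfrac12\,\mathbb{P}(E) \;\le\; \tfrac12 + \frac{N}{2^n}.
\]
Since $N=p(n,r(n))$ is polynomial in $n$, the term $N/2^n \to 0$, and taking $\epsilon(n) := N/2^n$ gives a bound of $\tfrac12 + \epsilon(n) \le \tfrac34 + \epsilon(n)$, as required.

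The only subtle step is the conditional-independence identity of the second paragraph. Care is needed because the conditioning event $E$ is itself defined through $x$ and the $z_i$, which are correlated with the revealed labels; one must verify that conditioning on $E$ leaks no information about the single bit $h_n(x)$. The deferred-revelation viewpoint is exactly what makes this transparent — the coin determining $h_n(x)$ is never consulted on $E$ — and I expect this to be the main point requiring justification, with everything else reducing to the union bound above.
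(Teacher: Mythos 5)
Your proof is correct, and it takes a genuinely different and more direct route than the paper's. The paper proves this lemma by contradiction: it first establishes Lemma~\ref{lemma_algorithm_W}, a counting argument showing that no estimator (efficient or not) that outputs a full hypothesis from a polynomial-size training set can land in a Hamming ball $\mathcal{B}^{\mathcal{C},D}_n(h_n,\alpha)$ with $\alpha<1/2$ except with vanishing probability; it then assumes a per-query success probability above $3/4+\delta$, amplifies by majority vote over repeated training sets, and uses a two-level decomposition into ``bad'' concepts and ``bad'' inputs to manufacture such an estimator, contradicting that lemma (the constant $3/4$ is an artifact of this decomposition). You instead exploit the defining property of a uniformly random Boolean function --- that its values at distinct points are i.i.d.\ fair coins --- via a deferred-revelation argument: on the event $E$ that the test point avoids the training queries, $h_n(x)$ is an unconsulted coin independent of the algorithm's input and randomness, so the conditional success probability is exactly $1/2$, and $\mathbb{P}(E^c)\le p(n,r(n))/2^n$ by a union bound. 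Your conditional-independence step is sound precisely because $E$ is measurable with respect to $(x,z_1,\dots,z_N)$ alone, which are independent of $h_n$, so conditioning on $E$ leaks nothing about $h_n(x)$. The payoff is a strictly stronger bound, $\tfrac12+p(n,r(n))/2^n$ in place of $\tfrac34+\epsilon(n)$, with a much shorter argument that bypasses Lemma~\ref{lemma_algorithm_W} entirely; what the paper's heavier machinery buys is a statement about approximating the whole function in Hamming distance, which is more than is needed for this lemma and for its downstream uses (Corollary~\ref{corollary_distributions} and Lemma~\ref{lemma_3} only require that the success probability be bounded away from $1$ by a constant).
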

    \begin{proof}
       The proof proceeds by contradiction. Specifically, we assume that there exists an algorithm $\mathcal{A}$ and polynomials such that
       \begin{equation}
            \mathbb{P}_{h_n\in \mathcal{H}_n,x,\,z^{p(n,r(n))},\,\mathcal{A}}\left(\mathcal{A}(x,\mathcal{T}_{p(n,r(n))}^{h_n},1^{r(n)})=h_n(x) \right)\geq \frac{3}{4}+\delta
       \end{equation}
        where $\delta$ is a strictly positive constant. Define $W(\mathcal{T}_{k p(n,r(n))}^{h_n}) = \hat{h}_n$ as follows: for each $x$, run $\mathcal{A}(x,\mathcal{T}_{p(n,r(n))}^{h_n},1^{r(n)})$ $k$ times and set $\hat{h}_n(x)$ by majority vote. If $\delta = 1/4$, then when estimating the concept for all values of $x \in \{0,1\}^n$, no errors will occur, and hence
        \begin{equation}
            \mathbb{P}_{h_n \in \mathcal{H}_n, z^{p(n,r(n))}, W}\left(W(\mathcal{T}_{kp(n,r(n))}^{h_n})\in \mathcal{B}^{\mathcal{H},D}_n(h_n,0) \right)=1
        \end{equation}
        which is a contradiction with Lemma \ref{lemma_algorithm_W}. Next, to analyze the case $\delta \in (0,1/4)$, we define the set $\mathcal{H}_{\text{bad},n}\subseteq \mathcal{H}_n$ as the set of concepts such that 
        \begin{equation}
            \mathbb{P}_{x,\,z^{p(n,r(n))},\,\mathcal{A}}\left(\mathcal{A}(x,\mathcal{T}_{p(n,r(n))}^{h_n},1^{r(n)})=h_n(x) \right)< \frac{3}{4}+\frac{\delta}{2}
        \end{equation}
        Therefore,
        \begin{align}
            \frac{3}{4}+\delta &\leq \mathbb{P}(h_n\in\mathcal{H}_{\text{bad},n} )\mathbb{P}_{h_n\in \mathcal{H}_n,x,\,z^{p(n,r(n))},\,\mathcal{A}}\left(\mathcal{A}(x,\mathcal{T}_{p(n,r(n))}^{h_n},1^{r(n)})=h_n(x)  |h_n\in\mathcal{H}_{\text{bad},n} \right) \nonumber \\& + (1- \mathbb{P}(h_n\in\mathcal{H}_{\text{bad},n} )) \mathbb{P}_{h_n\in \mathcal{H}_n,x,\,z^{p(n,r(n))},\,\mathcal{A}}\left(\mathcal{A}(x,\mathcal{T}_{p(n,r(n))}^{h_n},1^{r(n)})=h_n(x)  |h_n\notin\mathcal{H}_{\text{bad},n} \right) \nonumber \\ & < \left(\frac{3}{4}+\frac{\delta}{2}\right)\,\mathbb{P}(h_n\in\mathcal{H}_{\text{bad},n} ) + 1- \mathbb{P}(h_n\in\mathcal{H}_{\text{bad},n} )
        \end{align}
        which implies that 
        \begin{equation}
            \mathbb{P}(h_n\in\mathcal{H}_{\text{bad},n} )< \frac{4\delta-1}{2\delta-1}<1
        \end{equation}
        Let $\mathcal{X}_{\text{bad},n}(h)$ be the set
        \begin{equation}
            \left\{ x\in \{0,1\}^n : 
            \mathbb{P}_{z^{p(n,r(n))},\,\mathcal{A}}\left(\mathcal{A}(x,\mathcal{T}_{p(n,r(n))}^{h_n},1^{r(n)})=h_n(x) \right)< \frac{1}{2}(1+\delta)
            \right\}
        \end{equation}
        Therefore, for $h\in \mathcal{H}_{\text{good},n}:= \overline{\mathcal{H}_{\text{bad},n}}$,
        \begin{align}
             \frac{3}{4}+\frac{\delta}{2}&\leq \mathbb{P}(x\in\mathcal{X}_{\text{bad},n}(h) )\mathbb{P}_{x,\,z^{p(n,r(n))},\,\mathcal{A}}\left(\mathcal{A}(x,\mathcal{T}_{p(n,r(n))}^{h_n},1^{r(n)})=h_n(x) |x\in\mathcal{X}_{\text{bad},n}(h) \right) \nonumber \\& + (1-\mathbb{P}(x\in\mathcal{X}_{\text{bad},n}(h) )) \mathbb{P}_{x,\,z^{p(n,r(n))},\,\mathcal{A}}\left(\mathcal{A}(x,\mathcal{T}_{p(n,r(n))}^{h_n},1^{r(n)})=h_n(x)  |x\notin\mathcal{X}_{\text{bad},n}(h) \right) \nonumber \\ & < \frac{1}{2}(1+\delta) \,\mathbb{P}(x\in\mathcal{X}_{\text{bad},n}(h) )+ 1- \mathbb{P}(x\in\mathcal{X}_{\text{bad},n}(h) )
        \end{align}
        Consequently,
        \begin{equation}
            \mathbb{P}(x\in\mathcal{X}_{\text{bad},n}(h) )< \frac{1-2\delta}{2-2\delta}<\frac{1}{2}
        \end{equation}
        Finally, for $x \in \mathcal{X}_{\text{good},n}(h)$, the result of the majority vote, denoted by $\mathcal{A}'\big(x, \mathcal{T}_{k p(n,r(n))}^{h_n}\big)$, satisfies 
        \begin{equation}
            \mathbb{P}_{\,z^{kp(n,r(n))},\,\mathcal{A}'}\left(\mathcal{A}'(x,\mathcal{T}_{kp(n,r(n))}^{h_n})=h_n(x) \right)\geq 1- \beta^k
        \end{equation}
        where $0\leq\beta<1$. Therefore, when listing the values $\mathcal{A}'(x,\mathcal{T}_{k p(n,r(n))}^{h_n})$ for all $x \in \mathcal{X}_{\text{good},n}(h)$, the probability of making at least one error is given by
        \begin{align}                \mathbb{P}_{\,z^{kp(n,r(n))},\,\mathcal{A}'} &\left( \bigcup_{x\in \mathcal{X}_{\text{good},n} (h)} \left \{\mathcal{A}'(x,\mathcal{T}_{kp(n,r(n))}^{h_n})\neq h_n(x) \right\} \right) \leq \beta^k |\mathcal{X}_{\text{good},n} (h)| \nonumber \\ & \leq 2^n \beta^k
        \end{align}    
        Taking $k=n^2$, we have that $2^n \beta^k\rightarrow 0$ as $n\rightarrow \infty$. Hence, with probability 
        \begin{align}
             &\mathbb{P}(h_n\in\mathcal{H}_{\text{good},n} ) \mathbb{P}_{\,z^{kp(n,r(n))},\,\mathcal{A}'} \left( \bigcap_{x\in \mathcal{X}_{\text{good},n} (h)} \left \{\mathcal{A}'(x,\mathcal{T}_{kp(n,r(n))}^{h_n})= h_n(x) \right\} \right) \nonumber\\& \geq \frac{2\delta}{1-2\delta}\left(1-2^n \beta^{n^2}\right)
        \end{align}
        we correctly list a fraction $\mathbb{P}(x \in \mathcal{X}_{\text{good},n}(h)) > 1/2$ of all values of $h_n(x)$. That is, the output of $W(\mathcal{T}_{kp(n,r(n))}^{h_n})$ belongs to $\mathcal{B}^{\mathcal{C},D}_n(h_n,\alpha)$ with $\alpha<1/2$ with probability at least $\frac{2\delta}{1-2\delta}(1-2^n \beta^{n^2})$, which contradicts Lemma \ref{lemma_algorithm_W}.  
    \end{proof}

    Finally, we combine these results to prove Lemma \ref{lemma_3}.

    \begin{proof}
        Using Corollary \ref{corollary_distributions}, we have that for $D^T$ and $D^E$ being equal to the sequence of uniform distributions, 
        \begin{align}
            &\Big| \mathbb{P}_{f_n^{(j)}\in \mathcal{F}_n, x, \,z^{p(n,r(n))},\,\mathcal{A}}\left(\mathcal{A}(x,\mathcal{T}_{p(n,r(n))}^{f^{(j)}_n},1^{r(n)})=[f_n^{(j)}(x)]_1 \right)\nonumber \\& \hspace{3cm}-\mathbb{P}_{h_n\in \mathcal{H}_n,x,\,z^{p(n,r(n))},\,\mathcal{A}}\left(\mathcal{A}(x,\mathcal{T}_{p(n,r(n))}^{h_n},1^{r(n)})=[h_n(x)]_1 \right)\Big|\leq \frac{1}{q(n)}
        \end{align}
        Note that the second term coincides with the one analyzed in Lemma \ref{lemma_auxiliary}, that is, the concept class consists of all possible Boolean functions $\{0,1\}^n \rightarrow \{0,1\}$. Consequently,  
        \begin{equation}
            \mathbb{P}_{f_n^{(j)}\in \mathcal{F}_n, x, \,z^{p(n,r(n))},\,\mathcal{A}}\left(\mathcal{A}(x,\mathcal{T}_{p(n,r(n))}^{f^{(j)}_n},1^{r(n)})=[f_n^{(j)}(x)]_1 \right)\leq \frac{3}{4}+\epsilon(n)+ \frac{1}{q(n)}
        \end{equation}
        for any polynomials $p(n,m)$ and $r(n)$, and any efficient algorithm $\mathcal{A}$. Therefore, the concept class $\mathcal{C}_n:= \{ [f_n^{(j)}]_1\}_{j\in \{0,1\}^n}$ is not average case learnable for the sequence of uniform distributions $U$ when $U$ is the distribution of the training set. 
    \end{proof}

\section{Proof Theorem \ref{last_theorem}}\label{appendix_proof_last_theorem}

In this appendix, we provide the proof of Theorem \ref{last_theorem}, which, for completeness, is restated here. 

\begin{theorem*}
   Let $\mathcal{C}$ be a regular concept class, then if $\mathrsfso{D}(\mathcal{C},U)$ is not empty, then the uniform distribution satisfies $U\in \mathrsfso{D}(\mathcal{C},U)$.
\end{theorem*}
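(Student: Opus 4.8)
The plan is to exploit regularity to transport the existing learner --- which uses some training distribution $D^T$ possibly different from $U$ --- onto the uniform training distribution, at the cost of only a negligible loss in accuracy. Concretely, since $\mathrsfso{D}(\mathcal{C},U)\neq\emptyset$, fix a witnessing training distribution $D^T$, an efficient randomized learner $\mathcal{A}$, and a polynomial $p(n,m)$ such that, with training set drawn from $D^T$ and test point drawn from $U$, the average success probability is at least $1-1/m$ for every $n,m$ and every $c_n\in\mathcal{C}_n$. The goal is to manufacture an efficient learner $\mathcal{A}'$ and a polynomial $p'(n,m)$ achieving the same guarantee when the training set is instead drawn from $U$; this is exactly $U\in\mathrsfso{D}(\mathcal{C},U)$. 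I would apply the definition of a regular concept class with $D=D^T$, $D'=U$ and $D^E=U$, which is designed to convert a learner operating on $D^T$-training sets into one operating on $U$-training sets, \emph{provided} the distinct-element domination hypothesis $N(x^{p})\le N(z^{p'})$ holds with high probability.

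The crux --- and the step I expect to be the main obstacle --- is establishing that domination hypothesis uniformly in $n$. I would prove: for the given $p$ there is a polynomial $p'(n,m)$ with $\mathbb{P}\bigl(N(x^{p(n,m)})\le N(z^{p'(n,m)})\bigr)\ge 1-\delta(n,m)$, where $x_i\sim D^T_n$, $z_i\sim U_n$, and $\delta(n,m)\le 1/m$. The starting observation is deterministic: $N(x^{p})\le\min\{p(n,m),2^n\}$, so it suffices to guarantee $N(z^{p'})\ge\min\{p(n,m),2^n\}$ with probability at least $1-1/m$. The uniform distribution is the natural maximiser of the number of distinct elements (the map $q\mapsto 1-(1-q)^{s}$ is concave, so for a fixed sample size the expected number of distinct outcomes is largest for $U$), and I would turn this into a high-probability statement via a bounded-differences concentration bound for $N$. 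The delicate point is that a single polynomial $p'$ must work for every $n$: in the large-domain regime ($2^n$ much larger than $p$) collisions among the $U$-samples are rare, so taking $p'$ slightly above $p$ already yields $N(z^{p'})\ge p$; in the small-domain regime ($2^n$ comparable to or below $p$) one must instead \emph{cover} all $2^n$ points by a coupon-collector argument, costing $O(2^n(n+\log m))$ samples --- polynomial precisely because $2^n$ is small there. Matching these two regimes (together with the intermediate sizes) with one polynomial $p'$, while keeping $\delta(n,m)\le1/m$ for \emph{all} $n$ including the smallest values, is the technical heart of the argument.

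Granting the domination lemma, regularity supplies an algorithm $\mathcal{A}'$ such that, for each fixed concept $c_n^{(j)}$ and each $n,m$, the per-training-set accuracies $L(\mathcal{T}_{p}^{c})=\mathbb{E}_{x\sim U_n,\mathcal{A}}[\mathbbm{1}\{\mathcal{A}(x,\mathcal{T}_{p}^{c},1^m)=c_n^{(j)}(x)\}]$ and $R(\mathcal{T}_{p'}^{c})=\mathbb{E}_{x\sim U_n,\mathcal{A}'}[\mathbbm{1}\{\mathcal{A}'(x,\mathcal{T}_{p'}^{c},1^m)=c_n^{(j)}(x)\}]$ satisfy $L\le R$ on an event $G$ of probability at least $1-\delta(n,m)$ over the two (independent) training sets. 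I would then integrate this pointwise inequality: since $0\le L\le 1$, one has $\mathbb{E}[R]\ge\mathbb{E}[R\,\mathbbm{1}_G]\ge\mathbb{E}[L\,\mathbbm{1}_G]\ge\mathbb{E}[L]-\mathbb{P}(G^{c})\ge(1-1/m)-\delta(n,m)$, where $\mathbb{E}[L]$ is precisely the average success of the original learner $\mathcal{A}$ (training $D^T$, test $U$). Hence $\mathcal{A}'$, now trained on $U$ and tested on $U$, achieves average success at least $1-1/m-\delta(n,m)\ge 1-2/m$.

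Finally I would remove the factor-two loss by precision boosting: to meet a target accuracy $1-1/m^{\star}$, run the entire construction at precision parameter $2m^{\star}$, so that $\mathcal{A}$ contributes $1-1/(2m^{\star})$ and, by the choice of $p'$, $\delta(n,2m^{\star})\le 1/(2m^{\star})$, giving total accuracy at least $1-1/m^{\star}$ for every $n$, $m^{\star}$ and $c_n^{(j)}$. The one remaining point to check is efficiency: the learnability of Definition~\ref{average_case_learning} requires $\mathcal{A}'$ to run in polynomial time, so I would apply regularity to the efficient learner $\mathcal{A}$ and verify that the transformation underlying the regularity notion preserves efficiency (equivalently, read the existential ``algorithm $\mathcal{A}'$'' in the definition as ranging over efficient algorithms when $\mathcal{A}$ is efficient). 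With this, $p'$ is a polynomial and $\mathcal{A}'$ an efficient randomized learner witnessing $U\in\mathrsfso{D}(\mathcal{C},U)$.
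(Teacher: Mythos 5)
Your proposal is correct and follows essentially the same route as the paper: a distinct-element domination lemma for uniform samples (which the paper proves via stochastic domination by a Bernoulli random walk plus Markov's inequality, where you suggest concentration and an explicit two-regime analysis), followed by an application of regularity with $D'=U$, the integration inequality $\mathbb{E}[R]\ge\mathbb{E}[L]-\mathbb{P}(G^{c})$ (the paper's Lemma~7), and a final precision-boosting step replacing $m$ by $2m$. Your explicit treatment of the small-domain, coupon-collector regime is in fact more careful than the paper's, which dispatches it by ``adding a sufficiently large constant.''
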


The theorem is based on three lemmas: Lemma \ref{lemma_8}, Lemma \ref{lemma_6}, and Lemma \ref{lemma_7}, which are given below.

\begin{lemma}\label{lemma_8}
    For a random walk of the form $W_{i+1}=W_i+A$ with $A=\text{Bern}(p)$, $p>0$, the expected number of steps to reach $k \in \mathbb{N}$ is given by $k/p$.
\end{lemma}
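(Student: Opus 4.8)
The plan is to exploit the waiting-time structure of the walk. Since each increment is $A=\text{Bern}(p)$ taking values in $\{0,1\}$, the walk $W_i$ (started at $W_0=0$) is nondecreasing and advances by exactly one unit precisely when a success occurs. Hence reaching the level $k$ is equivalent to accumulating $k$ successes among the Bernoulli trials. Writing $T_k$ for the first index $i$ with $W_i=k$, I would decompose it as $T_k=\sum_{j=1}^{k}G_j$, where $G_j$ denotes the number of additional steps required to pass from level $j-1$ to level $j$. This reduces the whole problem to understanding a single inter-level waiting time.

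Next I would argue that each $G_j$ is a geometric random variable with parameter $p$ supported on the positive integers. Because the increments are i.i.d.\ and each trial is an independent $\text{Bern}(p)$, the number of steps until the next success does not depend on the past, so $\mathbb{P}(G_j=t)=(1-p)^{t-1}p$ for $t\ge 1$. Since $p>0$, each $G_j$ is finite almost surely and $T_k$ is well defined. A direct computation, or equivalently the one-line first-step identity $\mathbb{E}[G_j]=1+(1-p)\,\mathbb{E}[G_j]$ obtained by conditioning on whether the first trial is a success, yields $\mathbb{E}[G_j]=1/p$. By linearity of expectation, $\mathbb{E}[T_k]=\sum_{j=1}^{k}\mathbb{E}[G_j]=k/p$, which is exactly the claim; note that independence of the $G_j$ (which does hold, by the i.i.d.\ increments) is not even needed here, only linearity.

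I do not expect a genuine obstacle in this lemma. The only point requiring a little care is making the decomposition rigorous, namely justifying via the memorylessness of the Bernoulli trials (or the strong Markov property) that the successive inter-level waiting times are honest geometric variables, each finite almost surely because $p>0$. Once this bookkeeping is in place, the statement follows immediately from the mean of a geometric distribution together with linearity of expectation.
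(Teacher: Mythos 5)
Your argument is correct, but it takes a different route from the paper. You decompose the first-passage time $T_k$ into $k$ successive inter-level waiting times $G_j$, identify each as a geometric random variable with mean $1/p$, and conclude by linearity of expectation. The paper instead works with the first-passage time directly: it writes down its negative binomial law $\mathbb{P}(Y=y)=\binom{y-1}{k-1}p^k(1-p)^{y-k}$ and evaluates $\mathbb{E}[Y]=\sum_{y\ge k} y\,\mathbb{P}(Y=y)$ in closed form by recognizing the sum as a $k$-th derivative of the geometric series, i.e.\ $\frac{p^k}{(k-1)!}(-1)^k\frac{d^k}{dp^k}\frac{1}{p}=k/p$. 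The two proofs buy different things: yours is more elementary and probabilistic, avoids any series manipulation, and makes transparent why the answer is $k/p$ (each unit of progress costs $1/p$ steps on average, and as you note independence of the $G_j$ is not even needed); the paper's computation is more self-contained in the sense that it never invokes the memorylessness decomposition, at the cost of a heavier algebraic identity. Your one remaining bookkeeping point, justifying that the $G_j$ are honest geometrics via the i.i.d.\ structure of the increments, is standard and unproblematic, so there is no gap.
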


\begin{proof}
    Let r.v. $Y$ be distributed according to 
    \begin{align}
        \mathbb{P}_Y(y)&=\mathbb{P}(W_{y}=k \text{ and }W_{y-1}=k-1) \nonumber \\ &=\binom{y-1}{k-1} p^{k} (1-p)^{y-k}
    \end{align}
    Therefore,
    \begin{align}
        \mathbb{E}[Y]&=\sum_{y=k}^{\infty} y \binom{y-1}{k-1} p^{k} (1-p)^{y-k} \nonumber \\ & =\frac{p^{k}}{(k-1)!} \sum_{y=k}^{\infty} \prod_{j=1}^{k-1} (y-j)  (1-p)^{y-k} \nonumber \\ &= \frac{p^{k}}{(k-1)!} \sum_{y=1}^{\infty} (-1)^k \frac{d^k}{dp^k} \left(1-p\right)^{y-1} \nonumber \\ &=  \frac{p^k}{(k-1)!} (-1)^k \frac{d^k}{dp^k} \frac{1}{p}=\frac{k}{p}
    \end{align}
    
\end{proof}

\begin{lemma}\label{lemma_6}
    For any sequence of distributions $D$, and any polynomials $p(n), q(n) \in \mathbb{N}$, there exists a sufficiently large constant $c \in \mathbb{N}$ such that the polynomial $p'(n) = 2 p(n) q(n) + c$ satisfies
    \begin{equation}
        \mathbb{P}_{x^{p(n)},z^{p'(n)}}(N(x^{p(n)})\leq N(z^{p'(n)}))\geq 1-\frac{1}{q(n)}
    \end{equation}
    for any $n\in \mathbb{N}$, where $x_i\sim D_n$, $z_i\sim U_n$, and distribution $U_n$ denotes the uniform distribution over $\{0,1\}^n$. 
\end{lemma}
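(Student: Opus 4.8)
The plan is to reduce the statement to a pure coupon-collector estimate for the uniform samples and then control the relevant hitting time with Lemma~\ref{lemma_8}. The key starting observation is that $N(x^{p(n)})$ is bounded deterministically: from $p(n)$ draws one obtains at most $p(n)$ distinct strings, and there are only $2^n$ strings in total, so $N(x^{p(n)}) \leq \min\{p(n),2^n\}$ for every realization of $x^{p(n)}$. Consequently the event $\{N(z^{p'(n)}) \geq \min\{p(n),2^n\}\}$ is contained in $\{N(x^{p(n)}) \leq N(z^{p'(n)})\}$, and it suffices to prove
\begin{equation}
    \mathbb{P}_{z^{p'(n)}}\bigl(N(z^{p'(n)}) \geq \min\{p(n),2^n\}\bigr) \geq 1 - \frac{1}{q(n)}.
\end{equation}
In particular $D$ plays no further role, which is exactly why the conclusion holds for an arbitrary sequence $D$.

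Next I would translate the distinct-count of the uniform samples into a hitting time. Writing $M := \min\{p(n),2^n\}$ and revealing the $z_i$ one at a time, let $\tau_M$ be the number of draws needed until $M$ distinct strings have appeared, so that $N(z^{p'(n)}) \geq M$ precisely when $\tau_M \leq p'(n)$. Since a fresh uniform draw increases the distinct count with probability $(2^n - j)/2^n$ whenever $j < M$ strings have already been seen, $\tau_M$ is a sum of independent geometric waiting times whose success probabilities are all at least $(2^n - M)/2^n$. Hence $\tau_M$ is stochastically dominated by the time for the random walk of Lemma~\ref{lemma_8}, taken with this fixed Bernoulli probability, to reach $M$, whose expectation that lemma evaluates as $M$ divided by the probability.

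The argument then splits into two regimes. For the ``large'' values $n$ with $2^n \geq 2 p(n)$ one has $M = p(n)$ and each success probability is at least $1/2$, so Lemma~\ref{lemma_8} gives $\mathbb{E}[\tau_M] \leq 2 p(n)$; Markov's inequality yields
\begin{equation}
    \mathbb{P}\bigl(\tau_M > p'(n)\bigr) \leq \frac{2 p(n)}{2 p(n) q(n) + c} \leq \frac{1}{q(n)},
\end{equation}
which already holds with $c = 0$ (this is precisely where the factor $2p(n)q(n)$ in $p'(n)$ originates). The remaining values $n$ with $2^n < 2 p(n)$ form a finite set, because a fixed polynomial is eventually dominated by $2^n$; for each such $n$ the quantity $\mathbb{E}[\tau_M]$ is a fixed constant (at worst the full coupon-collector expectation $\sum_{j=0}^{2^n-1} 2^n/(2^n-j)$), so Markov's inequality again controls the failure probability as soon as $p'(n) \geq c$ is large enough. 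Choosing $c$ to be the maximum of the finitely many thresholds demanded by these exceptional $n$ makes the bound hold simultaneously for all $n \in \mathbb{N}$.

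The step I expect to be the main obstacle is this small-$n$ regime, where $2^n$ is comparable to or smaller than $p(n)$: there one cannot hope to reach $p(n)$ distinct strings and must instead argue that the uniform draws cover essentially the entire universe, so the delicate point is to check that a single constant $c$ simultaneously discharges all of these finitely many exceptional cases without disturbing the clean $1/q(n)$ bound obtained in the large-$n$ regime.
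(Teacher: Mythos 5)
Your proof is correct and follows essentially the same route as the paper's: bound the hitting time of the distinct-count process by a Bernoulli random walk, apply Lemma~\ref{lemma_8} and Markov's inequality, and absorb the finitely many small-$n$ exceptions into the constant $c$. If anything, your version is slightly more careful, since you explicitly use the threshold $2^n \geq 2p(n)$ (which is what is actually needed to bound $\mathbb{E}[\tau_M]$ by $2p(n)$) and the deterministic bound $N(x^{p(n)}) \leq \min\{p(n),2^n\}$, whereas the paper states the cutoff only as $p(n) < 2^n$.
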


\begin{proof}
    First, note that 
    \begin{equation}
        \mathbb{P}_{Z_{i+1}}\left( N(z^{i+1})=N(z^i)+1|Z^{i}=z^i\right)=1-\frac{N(z^i)}{2^n} \geq 1- \frac{p(n)}{2^n}
    \end{equation}
    for any $i \in [p(n)]$, or as long as $N(z^i) \leq p(n)$. Let $n_0(p)$ be a constant such that, for all $n \geq n_0(p)$, it holds that $p(n) < 2^n$. Therefore, for $n \geq n_0(p)$, the random walk  $W_{i+1}=W_i+A$ with $A\sim \text{Bern}\left(1-\frac{p(n)}{2^n}\right)$ grows slower than $N(z^{i+1})$ until reaching the value $p(n)$. Let the random variables $Y$ and $\tilde{Y}$ be defined by $\mathbb{P}(Y=y)=\mathbb{P}(W_{y}=p(n) \text{ and }W_{y-1}=p(n)-1)$ and  $\mathbb{P}(\tilde{Y}=y)=\mathbb{P}(N(Z^y)=p(n) \text{ and } N(Z^{y-1})=p(n)-1)$, respectively. Therefore, from these definitions, it follows that $\mathbb{E}[\tilde{Y}]\leq \mathbb{E}[Y]$. Hence, using Lemma \ref{lemma_8},
    \begin{equation}
        \mathbb{E}[\tilde{Y}]\leq \mathbb{E}[Y]=\frac{p(n)}{1-\frac{p(n)}{2^n}}:=g(n)
    \end{equation}
    Using Markov's inequality, 
    \begin{equation}
        \frac{1}{K} \geq \mathbb{P} (\tilde{Y}\geq K \mathbb{E}[\tilde{Y}]) \geq \mathbb{P} \left(\tilde{Y}\geq K g(n) \right)
    \end{equation}
   That is, we have that, with probability at least $1 - 1/K$, it holds that $N(z^{K g(n)}) \geq p(n)$. Finally, taking $K = q(n)$ shows that $2\,p(n)\,q(n)$ is sufficient for all $n \geq n_0(p)$. For $n < n_0(p)$, we can guarantee the same by adding a sufficiently large constant $c$.

\end{proof}

\begin{lemma}\label{lemma_7}
    For any bounded functions $f$ and $g$ with image $[0,B]$, and r.v. $X$ and $Y$, 
    \begin{equation}
        \mathbb{E}_X[f(X)] \leq \mathbb{E}_Y[g(Y)]+B\,\mathbb{P}_{X,Y}(f(X)>g(Y)) 
    \end{equation}
\end{lemma}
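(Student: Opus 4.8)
The plan is to rewrite the left-hand side minus the first term on the right as a single expectation over the joint law of $(X,Y)$ and then split it according to the event $\{f(X) > g(Y)\}$. Since $f(X)$ and $g(Y)$ depend on the jointly distributed pair $(X,Y)$, the marginal expectations satisfy $\mathbb{E}_X[f(X)] = \mathbb{E}_{X,Y}[f(X)]$ and $\mathbb{E}_Y[g(Y)] = \mathbb{E}_{X,Y}[g(Y)]$, so it suffices to bound $\mathbb{E}_{X,Y}[f(X) - g(Y)]$.

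The core step is the decomposition
\begin{equation}
    \mathbb{E}_{X,Y}[f(X) - g(Y)] = \mathbb{E}_{X,Y}\!\left[(f(X)-g(Y))\,\mathbbm{1}\{f(X) > g(Y)\}\right] + \mathbb{E}_{X,Y}\!\left[(f(X)-g(Y))\,\mathbbm{1}\{f(X) \leq g(Y)\}\right].
\end{equation}
On the event $\{f(X) \leq g(Y)\}$ the integrand $f(X)-g(Y)$ is non-positive, so the second term is at most $0$ and may be discarded. On the event $\{f(X) > g(Y)\}$, using that $f$ takes values in $[0,B]$ and $g$ is non-negative, one has $f(X) - g(Y) \leq f(X) \leq B$. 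Hence the first term is bounded by $B\,\mathbb{E}_{X,Y}[\mathbbm{1}\{f(X) > g(Y)\}] = B\,\mathbb{P}_{X,Y}(f(X) > g(Y))$.

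Combining these two observations yields $\mathbb{E}_X[f(X)] - \mathbb{E}_Y[g(Y)] \leq B\,\mathbb{P}_{X,Y}(f(X) > g(Y))$, which after rearrangement is exactly the claimed inequality. The argument is elementary and requires no regularity beyond boundedness; the only point deserving care is that the bound $f(X) - g(Y) \leq B$ relies simultaneously on the upper bound $f(X) \leq B$ and the lower bound $g(Y) \geq 0$, i.e.\ on the fact that both functions share the image $[0,B]$. There is no real obstacle here, so I would keep the write-up to these few lines rather than introducing any auxiliary machinery.
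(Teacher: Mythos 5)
Your proof is correct and uses essentially the same idea as the paper: split according to the event $\{f(X)>g(Y)\}$, bound the contribution on that event by $B$ using $f\leq B$ and $g\geq 0$, and absorb the complementary event into $\mathbb{E}_Y[g(Y)]$. The only (cosmetic) difference is that you work with the joint expectation of $f(X)-g(Y)$ directly, which avoids the paper's restriction to discrete (and implicitly independent) $X$ and $Y$, but the argument is the same.
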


\begin{proof}
    For simplicity, we assume that $X$ and $Y$ are discrete random variables.
    \begin{align}
        \mathbb{E}_{X}[f(X)]&=\sum_{x} f(x) \mathbb{P}_X (x) \nonumber \\&=\sum_{x} f(x) \left( \sum_{y} \mathbb{P}_Y(y) \mathbbm{1} \{f(x)> g(y)\}+\mathbb{P}_Y(y) \mathbbm{1} \{f(x)\leq  g(y)\}\right) \mathbb{P}_X (x)
        \nonumber \\&=\sum_{x,y} f(x) \mathbb{P}_X (x) \mathbb{P}_Y(y) \mathbbm{1} \{f(x)> g(y)\}+\sum_{x,y} f(x) \mathbb{P}_X (x) \mathbb{P}_Y(y) \mathbbm{1} \{f(x)\leq  g(y)\} \nonumber \\ & \leq B \, \mathbb{P}_{X,Y}(f(X)>g(Y))+\sum_{x,y} g(y) \mathbb{P}_X (x) \mathbb{P}_Y(y) \nonumber \\ &=B \, \mathbb{P}_{X,Y}(f(X)>g(Y))+ \mathbb{E}_Y[g(Y)]
    \end{align}
\end{proof}

Now that the different lemmas have been introduced, we can proceed with the proof of Theorem \ref{last_theorem}.

\begin{proof}
    Since $\mathrsfso{D}(\mathcal{C},U)\neq \emptyset$, there exists an algorithm $\mathcal{A}$, a sequence of distributions $D$, and a polynomial $p(n,m)$ satisfying Definition \ref{average_case_learning}. Next, using Lemma \ref{lemma_6}, we have that the polynomial $p'(n,m)=2p(n,m)m+c$ satisfies
    \begin{equation}
        \mathbb{P}_{x^{p(n,m)},z^{p'(n,m)}}(N(x^{p(n,m)})\leq N(z^{p'(n,m)}))\geq 1-\frac{1}{m}
    \end{equation}
    where $z_i \sim U_n$. Therefore, since the concept class is regular, there exists an algorithm $\mathcal{A}'$ such that 
    \begin{equation}
        \mathbb{E}_{x\sim U_n,\, \mathcal{A}}\left[ \mathbbm{1} \left\{\mathcal{A}(x,\mathcal{T}_{p(n,m)}^{c_n},1^m)=c_n(x)\right\} \right]\leq \mathbb{E}_{x\sim U_n,\,\mathcal{A}'}\left[ \mathbbm{1} \left\{\mathcal{A}'(x,\mathcal{T}_{p'(n,m)}^{c_n},1^m)=c_n(x)\right\} \right]
    \end{equation}
    with probability at least $1-1/m$ over the training sets $\mathcal{T}_{p(n,m)}^{c_n}=\{x_i,c_n(x_i)\}_{i=1}^{p(n,m)}$ and $\mathcal{T}_{p'(n,m)}^{c_n}=\{z_i,c_n(z_i)\}_{i=1}^{p'(n,m)}$. Now, we can apply Lemma \ref{lemma_7} and the fact that the triplet $(\mathcal{A},D,p)$ satisfies Definition \ref{average_case_learning} to conclude that
    \begin{equation}
        1-\frac{1}{m} \leq \mathbb{E}_{x\sim U_n,\,\mathcal{T}_{p'(n,m)}^{c_n},\,\mathcal{A}'}\left[ \mathbbm{1} \left\{\mathcal{A}'(x,\mathcal{T}_{p'(n,m)}^{c_n},1^m)=c_n(x)\right\} \right]+\frac{1}{m}
    \end{equation}
    That is, 
    \begin{equation}
        1-\frac{1}{m} \leq \mathbb{E}_{x\sim U_n,\,\mathcal{T}_{p'(n,2m)}^{c_n},\,\mathcal{A}'}\left[ \mathbbm{1} \left\{\mathcal{A}'(x,\mathcal{T}_{p'(n,2m)}^{c_n},1^{2m})=c_n(x)\right\} \right]
    \end{equation}
    Consequently, $U\in \mathrsfso{D}(\mathcal{C},U)$.
\end{proof}

\end{appendices}

\end{document}